\documentclass{article}

\usepackage[final]{neurips_2023}
\usepackage{joeyzhou}
\usepackage{thm-restate} 

\captionsetup[figure]{font=scriptsize}

\newtheorem{theorem}{Theorem}[section]

\newtheorem{claim}[theorem]{Claim}

\newtheorem{example}[theorem]{Example}
\newtheorem{corollary}[theorem]{Corollary}
\newtheorem{lemma}[theorem]{Lemma}

\usepackage[utf8]{inputenc} %
\usepackage[T1]{fontenc}    %
\usepackage[hidelinks]{hyperref}       %
\usepackage{url}            %
\usepackage{booktabs}       %
\usepackage{amsfonts}       %
\usepackage{nicefrac}       %
\usepackage{microtype}      %
\usepackage{xcolor}         %

\title{A Competitive Algorithm for Agnostic Active Learning}

\pdfoutput=1

\author{%
  Eric Price \\
  Department of Computer Science\\
  University of Texas at Austin\\
  \texttt{ecprice@cs.utexas.edu} \\
  \And
  Yihan Zhou \\
  Department of Computer Science\\
  University of Texas at Austin\\
  \texttt{joeyzhou@cs.utexas.edu} \\
}

\DeclareMathOperator*{\Xcal}{\mathcal{X}}

\newcommand{\Acal}{\mathcal{A}}

\DeclareMathOperator*{\err}{err}

\newcommand{\eps}{\varepsilon}
\newcommand{\wh}{\widehat}
\newcommand{\wt}{\widetilde}
\newcommand{\D}{\mathcal{D}}
\newcommand{\Ot}{\widetilde{O}}

\begin{document}

\maketitle

\begin{abstract}
  For some hypothesis classes and input distributions, \emph{active}
  agnostic learning needs exponentially fewer samples than passive
  learning; for other classes and distributions, it offers little to
  no improvement.  The most popular algorithms for agnostic active
  learning express their performance in terms of a parameter called
  the disagreement coefficient, but it is known that these algorithms
  are inefficient on some inputs.

  We take a different approach to agnostic active learning, getting an
  algorithm that is \emph{competitive} with the optimal algorithm for
  any binary hypothesis class $H$ and distribution $\D_X$ over $X$.
  In particular, if any algorithm can use $m^*$ queries to get
  $O(\eta)$ error, then our algorithm uses $O(m^* \log |H|)$ queries to
  get $O(\eta)$ error.  Our algorithm lies in the vein of the
  splitting-based approach of~\cite{Das04}, which gets a similar
  result for the realizable ($\eta = 0$) setting.

  We also show that it is NP-hard to do better than our algorithm's
  $O(\log |H|)$ overhead in general.
\end{abstract}

\section{Introduction}

Active learning is motivated by settings where unlabeled data is cheap
but labeling it is expensive.  By carefully choosing which points to
label, one can often achieve significant reductions in label
complexity~\citep{CAL94}.  A canonical example with exponential
improvement is one-dimensional threshold functions
$h_\tau(x) := 1_{x \geq \tau}$: in the noiseless setting, an active
learner can use binary search to find an $\eps$-approximation solution
in $O\bra{\log \frac{1}{\eps}}$ queries, while a passive learner needs
$\Theta\bra{\frac{1}{\eps}}$ samples~\citep{CAL94,dasgupta2005coarse,nowak2011geometry}.

In this paper we are concerned with agnostic binary classification.
We are given a hypothesis class $H$ of binary hypotheses
$h: \Xcal \to \{0, 1\}$ such that some $h^* \in H$ has
$\err(h^*) \leq \eta$, where the error
\[
  \err(h) := \Pr_{(x, y) \sim \D}[ h(x) \neq y]
\]
is measured with respect to an unknown distribution $\D$ over
$\Xcal \times \{0, 1\}$.  In our \emph{active} setting, we also know
the marginal distribution $\D_X$ of $x$, and can query any point $x$
of our choosing to receive a sample $y \sim (Y \mid X=x)$ for
$(X, Y) \sim \D$.  The goal is to output some $\wh{h}$ with
$\err(\wh{h}) \leq \eta + \eps$, using as few queries as possible.

The first interesting results for agnostic active learning were shown
by~\cite{BBL06}, who gave an algorithm called Agnostic Active (A\textsuperscript{2}) that gets
logarithmic dependence on $\eps$ in some natural settings: it needs
$\Ot\bra{\log \frac{1}{\eps}}$ samples for the 1d linear threshold
setting (binary search), as long as as $\eps > 16 \eta$, and
$\Ot\bra{d^2 \log \frac{1}{\eps}}$ samples for $d$-dimensional linear
thresholds when $\D_X$ is the uniform sphere and
$\eps > \sqrt{d} \eta$.  This stands in contrast to the polynomial
dependence on $\eps$ necessary in the passive setting.  The bound's
requirement that $\eps \gtrsim \eta$ is quite natural given a lower
bound of $\Omega\bra{d \frac{\eta^2}{\eps^2}}$ due
to~\citep{K06,BDL09}, where $d$ is the VC dimension.  Subsequent works
have given new algorithms~\citep{dasgupta2007general,BHLZ10} and new
analyses \citep{hanneke2007bound} to get bounds for more general
problems, parameterized by the ``disagreement coefficient'' of the
problem.  But while these can give better bounds in specific cases,
they do not give a good competitive ratio to the optimum algorithm:
see (\cite{hanneke2014theory}, Section 8.2.5) for a realizable example
where $O\bra{\log \frac{1}{\eps}}$ queries are possible, but
disagreement-coefficient based bounds lead to
$\Omega\bra{\frac{1}{\eps}}$ queries.

By contrast, in the \emph{realizable, identifiable} setting
($\eta = \eps = 0$), a simple greedy algorithm \emph{is} competitive
with the optimal algorithm.  In particular,~\cite{Das04} shows that if
any algorithm can identify the true hypothesis in $m$ queries, then
the greedy algorithm that repeatedly queries the point that splits the
most hypotheses will identify the true hypothesis in $O(m \log |H|)$
queries.  This extra factor of $\log |H|$ is computationally necessary:
as we will show in Theorem~\ref{thm:lower}, avoiding it is NP-hard in
general.
This approach can extend~\citep{dasgupta2005coarse} to the PAC setting
(so $\eps > 0$, but still $\eta = 0$), showing that if any algorithm
gets error $\eps$ in $m^*$ queries, then this algorithm gets error
$8 \eps$ in roughly $\Ot(m^* \cdot \log |H|)$ queries (but see the
discussion after Theorem 8.2 of~\cite{hanneke2014theory}, which points
out that one of the logarithmic factors is in an uncontrolled
parameter $\tau$, and states that ``Resolving the issue of this extra
factor of $\log \frac{1}{\tau}$ remains an important open problem in
the theory of active learning.'').

The natural question is: can we find an agnostic active learning
algorithm that is competitive with the optimal one in the agnostic
setting?

\paragraph{Our Results.}  Our main result is just such a competitive
bound.  We say an active agnostic learning algorithm $\mathcal{A}$
solves an instance $(H, \D_X, \eta, \eps, \delta)$ with $m$
measurements if, for every distribution $\D$ with marginal $\D_X$ and
for which some $h^* \in H$ has $\err(h^*) \leq \eta$, with probability
$1-\delta$, $\mathcal{A}$ uses at most $m$ queries and outputs
$\wh{h} \in H$ with $\err\bra{\wh{h}} \leq \eta + \eps$.  Let
$m^*(H, \D_X, \eta, \eps, \delta)$ be the optimal number of queries
for this problem, i.e., the smallest $m$ for which any $\mathcal{A}$
can solve $(H, \D_X, \eta, \eps, \delta)$.

Define $N(H, \D_X, \alpha)$ to be the size of the smallest
$\alpha$-cover over $H$, i.e., the smallest set $S \subseteq H$ such
that for every $h \in H$ there exists $h' \in S$ with
$\Pr_{x \sim \D_X}[h(x) \neq h'(x)] \leq \alpha$.  When the context is clear, we drop the parameters and simply use $N$. Of course, $N$ is
at most $\abs{H}$.

\begin{restatable}[Competitive Bound]{theorem}{mainthm}\label{thm:main}
  There exist some constants $c_1, c_2$ and $c_3$ such that for any instance $(H, \D_X, \eta, \eps, \delta)$ with
  $\eps \ge c_1\eta$, Algorithm~\ref{Alg:SAAAL} solves the instance with
  sample complexity
  \begin{align*}
    m(H, \D_X, \eta, \eps, \delta) &\lesssim \bra{m^*\bra{H, \D_X, c_2\eta, c_3\eps, \frac{99}{100}} + \log \frac{1}{\delta}} \cdot \log \frac{ N(H, \D_X, \eta)}{\delta}
  \end{align*}
  and polynomial time.
\end{restatable}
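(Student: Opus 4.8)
\emph{Overview and reduction to a cover.} The plan is to port the splitting/greedy paradigm of~\cite{Das04} to the noisy setting. First I would replace $H$ by a minimal $\eta$-cover $S\subseteq H$, so $|S| = N$; then the existence of $h^*$ gives some $h^{\mathrm{cov}}\in S$ with $\Pr_{\D_X}[h^{\mathrm{cov}}\ne h^*]\le\eta$, hence $\err(h^{\mathrm{cov}})\le 2\eta$, and for any $h'\in S$ we have $\err(h')\le\err(h^{\mathrm{cov}})+\Pr_{\D_X}[h'\ne h^{\mathrm{cov}}]$. Because $\eps\gtrsim\eta$, the factor-$2$ loss and every subsequent $O(\eta)$ loss is absorbed into the allowed $O(\eps)$ slack in the output error and into the constants $c_2,c_3$ of the comparison instance; one also checks that restricting to $S$ (with relaxed error and confidence) only decreases the optimal query count, so the $m^*$ on the right-hand side still dominates. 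Thus it suffices to design an algorithm over a size-$N$ class that, with probability $1-\delta$, outputs some $h$ with $\err(h)\le\eta+O(\eps)$ using $\lesssim (m^*+\log\frac1\delta)\log\frac N\delta$ queries and polynomial time.

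\emph{Version space and termination.} The algorithm maintains a version space $V\subseteq S$ that, with high probability, always contains $h^{\mathrm{cov}}$, and proceeds in $O(\log N)$ epochs, each of which either halves $|V|$ or certifies termination. The certificate is $\Pr_{\D_X}[\mathrm{DIS}(V)]\le O(\eps)$ after discarding the ``too-noisy'' region $\{x:\Pr[Y=1\mid x]\in(\gamma,1-\gamma)\}$: since $h^*$ pays at least $\gamma$ error per unit $\D_X$-mass there, that region has mass $\le\eta/\gamma=O(\eps)$ and may be ignored, and outside it everything in $V$ agrees with $h^{\mathrm{cov}}$ up to $O(\eps)$ mass, so any surviving $h$ has $\err(h)\le\err(h^{\mathrm{cov}})+O(\eps)\le\eta+O(\eps)$. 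When $\mathrm{DIS}(V)$ is still large, the too-noisy part is only a small fraction of it, which is precisely what lets the noise be absorbed below.

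\emph{Greedy splitting and the competitive count.} Inside an epoch the algorithm repeatedly queries a $\D_X$-sample from the still-unresolved region of $\mathrm{DIS}(V)$ that most evenly divides $V$, a constant number of times per point, and runs a noise-robust version of the elimination step: it drops a hypothesis once its disagreement count with the observed labels crosses a threshold set so that $h^{\mathrm{cov}}$ survives (its expected disagreement rate is biased low by $O(\eta)$, because the problematic points --- too noisy, or where $h^{\mathrm{cov}}$ is Bayes-inconsistent --- carry $O(\eta)$ mass). The competitive count then comes from adapting the key lemma of~\cite{Das04}: from the fact that the optimum algorithm resolves (up to $O(\eps)$) every $V$-consistent instance in $m^*$ queries, one extracts for every sub-version-space $V'\subseteq V$ a query that splits it with enough balance; the greedy choice does at least as well; and a potential argument on $\log|V'|$ shows $O(m^*)$ greedy queries halve $|V|$, for $O(m^*\log N)$ queries in all. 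Amplifying by a constant, targeting a per-epoch failure probability $\delta/\log N$ (which, since an epoch tolerates a constant fraction of erroneous responses, costs only $O(m^*+\log\frac{\log N}{\delta})$ queries per epoch), and union-bounding over the $\le N$ hypotheses used to evaluate the split rule, yields total query complexity $\lesssim(m^*+\log\frac1\delta)\log\frac N\delta$; each step is a polynomial-time computation over the cover. (The $\log N$ overhead is unavoidable in general: removing it is NP-hard even when $\eta=\eps=0$, cf.\ Theorem~\ref{thm:lower}.)

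\emph{Main obstacle.} The heart of the argument is the interface between the noise and the Dasgupta-style analysis. The greedy queries are forced by the combinatorics of $V$, not chosen to dodge noise, so one must show simultaneously that (a) over $O(m^*\log N)$ such noisy queries the disagreement count of $h^{\mathrm{cov}}$ never reaches its elimination threshold, (b) the queries probe enough of $\mathcal{X}$ that every survivor genuinely has error $\eta+O(\eps)$ rather than merely agreeing with $h^{\mathrm{cov}}$ on the tested points, and (c) the splitting lemma of~\cite{Das04}, stated for exact realizable identification, still produces a balanced split of each sub-version-space when the comparison optimum is only required to $O(\eps)$-resolve a \emph{noisy} instance and may itself exploit points we have classified as too noisy. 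Reconciling (c) with (a)--(b) --- essentially, arguing that ``resolve up to the $\eta/\gamma$-mass noisy region and the $\eps$ slack'' behaves like exact resolution for the potential argument --- is where the technical work concentrates; the rest is bookkeeping made possible by $\eps\gtrsim\eta$.
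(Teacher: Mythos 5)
Your proposal takes a genuinely different route from the paper (hard version-space elimination with greedy Dasgupta-style splitting, rather than multiplicative weights over a posterior), but it has a gap at exactly the point the paper identifies as the crux of the agnostic problem, and the obstacle you flag at the end is not resolved by anything earlier in your argument. The difficulty is that the adversary's error budget $\eta$ is measured under $\D_X$, while your elimination rule measures disagreement counts under the \emph{query} distribution. Your greedy split point (or $\D_X$ restricted to the unresolved part of $\mathrm{DIS}(V)$) is a nearly deterministic, predictable distribution $q$ whose density ratio $\max_x q(x)/\D_X(x)$ can be enormous; the adversary can then set $\Pr[Y\ne h^{\mathrm{cov}}(x)\mid x]$ close to $1$ on exactly the queried region at a total $\D_X$-cost of only $\eta$. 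So the premise of your step (a) --- that $h^{\mathrm{cov}}$'s disagreement rate on the queried points is biased low by $O(\eta)$ --- is false: $O(\eta)$ mass under $\D_X$ can be $\Omega(1)$ mass under $q$, and $h^{\mathrm{cov}}$ gets eliminated (or, dually, the queries convey no information). The paper's algorithm is built around this: it chooses $q$ to maximize $\E_{x\sim q}[r(x)]-\Omega(\eta)\max_x q(x)/\D_X(x)$, and its Lemma~\ref{lem:rm} is a minimax argument (over adversary corruption functions $g$) showing such a $q$ with value $\Omega(1/m^*)$ exists whenever no small ball carries $80\%$ of the posterior. Nothing in your proposal supplies an analogue of this trade-off or of that existence lemma, and your point (c) is precisely where it would be needed.

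A second missing piece is the ``wrong heavy ball'' phenomenon (the paper's Figure~\ref{fig:hard}): if the surviving version space (or posterior) concentrates metrically near an incorrect hypothesis, every balanced split of $V$ queries points where the adversary can make the wrong cluster look perfect, and the true hypothesis's standing can genuinely \emph{decrease}. The paper handles this with the capping construction, the two-term potential~\eqref{Equation:Potential}, and a separate second-stage tournament (Theorem~\ref{thm:slowbound}) over the $O(\log\frac{|H|}{\delta})$ ball centers; your epoch/halving scheme has no counterpart, and the claim that each epoch halves $|V|$ fails in exactly this regime. The cover reduction, the $\eps\gtrsim\eta$ slack bookkeeping, and the Freedman-style per-epoch amplification in your write-up do match the paper's outline, but the two mechanisms above are the substance of the proof and are absent.
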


Even the case of $\eta = 0$ is interesting, given the discussion
in~\citep{hanneke2014theory} of the gap
in~\citep{dasgupta2005coarse}'s bound, but the main contribution is
the ability to handle the agnostic setting of $\eta > 0$.  The
requirement that $\eps \ge O(\eta)$ is in line with prior
work~\citep{BBL06,dasgupta2005coarse}.  Up to constants in $\eta$ and
$\eps$, Theorem~\ref{thm:main} shows that our algorithm is within a
$\log N \leq \log |H|$ factor of the optimal query complexity.

We show that it NP-hard to avoid this $\log N$ factor, even in the
realizable $(\eta = \eps = \delta = 0)$ case:

\begin{restatable}[Lower Bound]{theorem}{thmlower}\label{thm:lower}
  It is NP-hard to find a query strategy for every agnostic active
  learning instance within an $c\log|H|$ for some constant $c>0$ factor of the
  optimal sample complexity.
\end{restatable}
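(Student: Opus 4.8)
In the realizable, identifiable regime ($\eta = \eps = \delta = 0$), ``solving'' an instance means deterministically pinning down $h^*$, so $m^*$ is exactly the minimum worst-case depth of a decision tree whose leaves are the hypotheses of $H$; the plan is to reduce gap \textsc{Set Cover} to approximating this quantity. Recall it is NP-hard to distinguish set systems $(U,\mathcal S)$ on $|U| = n$ elements (with $\bigcup\mathcal S = U$) in which $U$ has a cover of size $k$ from those in which every cover has size $> (1-\gamma)k\ln n$, for a fixed constant $\gamma \in (0,1)$. By first replacing the instance with $\lceil (\log n)/k\rceil$ disjoint copies of itself — which scales $n$ and $k$ by the same factor and hence preserves the gap up to a $1-o(1)$ factor — I may assume $k \ge \log n$, so that in the \textsc{No} case every cover has size $\Omega(\log^2 n)$.

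From $(U,\mathcal S)$ I build the realizable instance with $H = \{h_0\}\cup\{h_u : u\in U\}$, query set $X = \mathcal S \cup \mathcal B$, and $\D_X$ uniform on $X$, where each query is identified with a subset of $U$, $h_0 \equiv 0$, and $h_u(q) = 1_{u\in q}$. Here $\mathcal B$ is a \emph{binary-search scaffold}: for each $S\in\mathcal S$ I fix an ordering of $S$ and put into $\mathcal B$ all of its dyadic subintervals in that ordering (a set of size $O(|S|)$, each one a subset of $S$); the singletons $\{u\}$ then appear, so the $h_u$ are distinct and $|H| = n+1$. The key property is that every scaffold set is contained in some $S\in\mathcal S$, hence is dominated in any cover, so $\mathcal S\cup\mathcal B$ and $\mathcal S$ have the same minimum cover of $U$. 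Everything is polynomial-size.

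For the \textsc{No}-case lower bound, I look, in any decision tree, at the leaf labelled $h_0$: every query $q$ on the path to it is answered $h_0(q) = 0$, so another hypothesis $h_u$ is eliminated only if $u\in q$, meaning the queries on this path form a cover of $U$ drawn from $\mathcal S\cup\mathcal B$; that needs $\Omega(\log^2 n)$ queries, so $m^* = \Omega(\log^2 n)$. For the \textsc{Yes}-case upper bound, take a cover $S_{j_1},\dots,S_{j_k}$ with $k = O(\log n)$ and query $S_{j_1}, S_{j_2},\dots$ in order: if all answer $0$ then $h^* = h_0$ (every $u$ lies in some $S_{j_i}$), using $\le k$ queries; otherwise, at the first $S_{j_i}$ answering $1$ we know $h^* = h_u$ for some $u\in S_{j_i}$ and finish by binary-searching within $S_{j_i}$ using its scaffold, another $\le \log n$ queries. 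So $m^* \le k + \log n = O(\log n)$.

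Since $|H| = n+1$, a \textsc{Yes} instance has optimal query complexity $O(\log|H|)$ and a \textsc{No} instance has $\Omega(\log^2|H|)$, a gap of $\Omega(\log|H|)$; so for a suitable constant $c>0$, any polynomial-time algorithm that always returns a strategy within a $c\log|H|$ factor of $m^*$ would decide gap \textsc{Set Cover} — just compute the depth of the returned decision tree and test it against the $\Theta(\log^2 n)$ threshold separating the two cases — which is NP-hard, giving the theorem. The part I expect to need the most care is the \textsc{Yes}-case upper bound: a small cover by itself only isolates the ``sink'' hypothesis $h_0$ and leaves all of the $h_u$'s to be separated, which in a generic set system is expensive; the scaffold is exactly what makes that residual identification cost $O(\log n)$ while (because scaffold sets are dominated) leaving the cover number, and hence the \textsc{No}-case lower bound, untouched. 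Calibrating $k$ to $\Theta(\log n)$ via disjoint copies is what upgrades the $\ln n$ \textsc{Set Cover} gap into an $\Omega(\log|H|)$ gap here rather than a constant one.
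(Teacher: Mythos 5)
Your proposal is correct and follows essentially the same route as the paper's proof: a reduction from the hardness of approximating \textsc{SetCover}, with elements as hypotheses, sets plus a binary-identification gadget as query points, the ``all-zero answer path must be a cover'' argument for the \textsc{No}-case lower bound, and ``query a small cover, then binary search'' for the \textsc{Yes}-case upper bound. The only differences are cosmetic: you enforce $k \gtrsim \log n$ by padding with disjoint copies where the paper extracts this property directly from the Dinur--Steurer instances, and your dyadic-interval scaffold plays the same role as the paper's bit-indicator coordinates $(s,j)$.
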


This is a relatively simple reduction from the hardness of
approximating \textsc{SetCover}~\citep{dinur2014analytical}.  The lower
bound instance has $\eta = \eps = \delta = 0$, although these can be
relaxed to being small polynomials (e.g.,
$\eps = \eta = \frac{1}{3 \abs{X}}$ and
$\delta = \frac{1}{3 \abs{H}}$).

\paragraph{Extension.}
We give an improved bound for our algorithm in the case of noisy binary search (i.e., $H$ consists of 1d threshold functions). When $\eta = \Theta(\eps)$,
$N(H, \D_X, \eps) = \Theta(\frac{1}{\eps})$ and
$m^*(\eta, \eps, .99) = O(\log \frac{1}{\eps})$.  Thus
Theorem~\ref{thm:main} immediately gives a bound of
$O(\log^2\frac{1}{\eps \delta})$, which is
nontrivial but not ideal.  (For $\eta \ll \eps$, the same bound holds
since the problem is strictly easier when $\eta$ is smaller.)
However, the bound in Theorem~\ref{thm:main} is quite loose in this
setting, and we can instead give a bound of
\[
  O\bra{\log \frac{1}{\eps\delta} \log \frac{\log
    \frac{1}{\eps}}{\delta}}
\]
for the same algorithm, Algorithm~\ref{Alg:SAAAL}.  \redtext{This matches the
bound given by disagreement coefficient based algorithms for constant $\delta$.} The proof
of this improved dependence comes from bounding a new parameter
measuring the complexity of an $H, \D_x$ pair; this parameter is
always at least $\Omega(\frac{1}{m^*})$ but may be much larger (and is
constant for 1d threshold functions).  See Theorem~\ref{thm:upperbeta}
for details.

\subsection{Related Work}

Active learning is a widely studied topic, taking many forms beyond
the directly related work on agnostic active learning discussed
above~\citep{settles.tr09}. Our algorithm can be viewed as
similar to ``uncertainty sampling''~\citep{lewis1995sequential,lewis1994heterogeneous}, a popular empirical approach to
active learning, though we need some
modifications to tolerate adversarial noise.

\redtext{One problem related to the one studied in this paper is noisy binary search, which corresponds to
active learning of 1d thresholds.  This has been extensively studied
in the setting of \emph{i.i.d.} noise~\citep{BZ74,BH08,DLU21} as well as
monotonic queries~\citep{KK07}. Some work in this vein has extended
beyond binary search to (essentially) active binary
classification~\citep{nowak2008generalized,nowak2011geometry}.  These
algorithms are all fairly similar to ours, in that they do
multiplicative weights/Bayesian updates, but they query the
\emph{single} maximally informative point.  This is fine in the i.i.d.
noise setting, but in an agnostic setting the adversary can corrupt
that query.  For this reason, our algorithm needs to find a \emph{set}
of high-information points to query.
}

\redtext{Another related problems is decision tree learning. The realizable, noiseless case $\eta=\eps=0$ of our problem can be reduced to learning a binary decision tree with minimal depth. \citet{hegedHus1995generalized} studied this problem and gave basically the same upper and lower bound as in \citet{dasgupta2005coarse}. \citet{kosaraju2002optimal} studied a split tree problem, which is a generalization of binary decision tree learning, and also gave similar bounds. \citet{Azad2022} is a monograph focusing on decision tree learning, in which many variations are studied, including learning with noise. However, this line of work usually allows different forms of queries so their results are not directly comparable from results in the active learning literature.}

\redtext{For much more work on the agnostic active binary classification problem, see \citet{hanneke2014theory} and references therein.  Many of these papers give bounds in terms of the disagreement coefficient, but sometimes in terms of other parameters.  For example, \citet{katz2021improved} has a query bound that is always competitive with the disagreement coefficient-based methods, and sometimes much better; still, it is not competitive with the optimum in all cases.}

\redtext{In terms of the lower bound, it is shown in \citet{laurent1976constructing} that the problem is NP-complete, in the realizable and noiseless setting. To the best of our knowledge, our Theorem~\ref{thm:lower} showing hardness of approximation to within a $O(\log|H|)$ factor is new. }

\paragraph{Minimax sample complexity bounds.}
\citet{hanneke2015minimax} and \citet{hanneke2007teaching} have also given ``minimax'' sample complexity bounds for their algorithms, also getting a sample complexity within $O(\log |H|)$ of optimal.  However, these results are optimal with respect to the sample complexity for the worst-case distribution over $y$ \emph{and $x$}.  
But the unlabeled data $x$ is given as input.  So one should hope for a bound with respect to optimal for the \emph{actual} $x$ and only worst-case over $y$; this is our bound.

We give the following example to illustrate that our bound, and indeed our algorithm, can be much better.
\begin{example}
Define a hypothesis class of $N$ hypotheses $h_1,\cdots,h_N$
, and $\log N+N$ data points $x_1,\cdots,x_{\log N+N}$. For each hypothesis $h_j$, the labels of the first $N$ points express $j$ in unary and the labels of the last $\log N$ points express $j$ in binary.  We set $\eta=\eps=0$ and consider the realizable case. 
\end{example}
In the above example, the binary region is far more informative than the unary region, but disagreement coefficient-based algorithms just note that every point has disagreement.  Our algorithm will query the binary encoding region and take $O(\log N)$ queries. Disagreement coefficient based algorithms, including those in \citet{hanneke2015minimax} and \citet{hanneke2007teaching}, will rely on essentially uniform sampling for the first $\Omega(N/\log N)$ queries.
These algorithms are ``minimax'' over $x$, in the sense that \emph{if you didn't see any $x$ from the binary region}, you would need almost as many samples as they use.  But you \emph{do} see $x$ from the binary region, so the algorithm should make use of it to get exponential improvement.

\paragraph{Future Work.}
Our upper bound assumes full knowledge of $\D_X$ and the ability to
query arbitrary points $x$.  Often in active learning, the algorithm
receives a large but not infinite set of unlabeled sample points $x$,
and can only query the labels of those points.  How well our results
adapt to this setting we leave as an open question.

Similarly, our bound is polynomial in the number of hypotheses and the
domain size.  This is hard to avoid in full generality---if you don't
evaluate most hypotheses on most data points, you might be missing the
most informative points---but perhaps it can be avoided in structured
examples.

\section{Algorithm Overview}

Our algorithm is based on a Bayesian/multiplicative weights type approach to the problem, and is along the lines of the splitting-based
approach of~\cite{Das04}.

We maintain a set of weights $w(h)$ for each $h \in H$, starting at
$1$; these induce a distribution
$\lambda(h) := \frac{w(h)}{\sum_h w(h)}$ which we can think of as our
posterior over the ``true'' $h^*$.

\paragraph{Realizable setting.}
As initial intuition, consider the realizable case of
$\eta = \eps = 0$ where we want to find the true $h^*$.  If $h^*$
really were drawn from our prior $\lambda$, and we query a point $x$,
we will see a $1$ with probability $\E_{h \sim \lambda} h(x)$.  Then
the most informative point to query is the one we are least confident
in, i.e., the point $x^*$ maximizing
\[
  r(x) := \min\curly{ \E_{h\sim \lambda}[h(x)], 1 - \E_{h\sim \lambda}[h(x)]}.
\]
Suppose an algorithm queries $x_1, \dotsc, x_m$ and receives \redtext{the majority label} under $h \sim \lambda$ each time.  Then the fraction of
$h \sim \lambda$ that agree with \emph{all} the queries is at least
$1 - \sum_{i =1}^m r(x_i) \geq 1 - m r(x^*)$.  This suggests that, if
$r(x^*) \ll \frac{1}{m}$, it will be hard to uniquely identify $h^*$.
It is not hard to formalize this, showing that: if no single
hypothesis has $75\%$ probability under $\lambda$, and any algorithm
exists with sample complexity $m$ and $90\%$ success probability at
finding $h^*$, we must have $r(x^*) \geq \frac{1}{10 m}$.

This immediately gives an algorithm for the $\eta = \eps = 0$ setting:
query the point $x$ maximizing $r(x)$, set $w(h) = 0$ for all
hypotheses $h$ that disagree, and repeat.  As long as at least two
hypotheses remain, the maximum probability will be
$50\% < 90\%$ and each iteration will remove an
$\Omega(\frac{1}{m})$ fraction of the remaining hypotheses; thus after
$O(m \log H)$ rounds, only $h^*$ will remain.
This is the basis for~\cite{Das04}.

\paragraph{Handling noise: initial attempt.}
There are two obvious problems with the above algorithm in the
agnostic setting, where a (possibly adversarial) $\eta$ fraction of
locations $x$ will not match $h^*$.  First, a single error will cause
the algorithm to forever reject the true hypothesis; and second, the
algorithm makes deterministic queries, which means adversarial noise
could be placed precisely on the locations queried to make the
algorithm learn nothing.

To fix the first problem, we can adjust the algorithm to perform
multiplicative weights: if in round $i$ we query a point $x_i$ and see
$y_i$, we set
\[
  w_{i+1}(h) =
  \begin{cases}
    w_i(h) & \text{if } h(x_i) = y_i\\
    e^{-\alpha} w_i(h) & \text{if } h(x_i) \neq y_i
  \end{cases}
\]
for a small constant $\alpha = \frac{1}{5}$.  To fix the second
problem, we don't query the single $x^*$ of maximum $r(x^*)$, but
instead choose $x$ according to distribution $q$ over many points $x$ with
large $r(x)$.

To understand this algorithm, consider how $\log \lambda_i(h^*)$
evolves in expectation in each step. This increases if the query is correct, and
decreases if it has an error.  A correct query increases $\lambda_i$
in proportion to the fraction of $\lambda$ placed on
hypotheses that get the query wrong, which is at least $r(x)$; and the probability of an error
is at most $\eta \max_x \frac{q(x)}{\D_x(x)}$. If at iteration $i$ the algorithm uses query distribution $q$, some calculation gives
that
\begin{align}
  \E_{q}\squr{\log \lambda_{i+1}(h^*) - \log \lambda_{i}(h^*)} \geq 0.9\alpha\left(\E_{x \sim q}[r(x)] - 2.3 \eta \max_x \frac{q(x)}{\D_x(x)}\right).
  \label{eq:potential}
\end{align}
The algorithm can choose $q$ to maximize this bound on the potential gain.  There's a tradeoff between concentrating the samples over the $x$ of largest $r(x)$, and spreading out the samples so the adversary can't raise the error probability too high.  We show that if learning is possible by any algorithm (for a constant factor larger $\eta$), then there exists a $q$ for which this potential gain is significant.

\begin{restatable}[Connection to OPT]{lemma}{lemrm}\label{lem:rm}
  Define $\norm{h - h'} = \Pr_{x \sim \D_x}[h(x) \neq h'(x)]$.  Let
  $\lambda$ be a distribution over $H$ such that no
  radius-$(2\eta+\eps)$ ball $B$ centered on $h \in H$ has probability at least $80\%$.  Let
  $m^* = m^*\bra{H, \D_X, \eta, \eps, \frac{99}{100}}$.  Then there exists
  a query distribution $q$ over $\Xcal$ with
  \[
    \E_{x  \sim q}[r(x)] - \frac{1}{10} \eta \max_x \frac{q(x)}{\D_X(x)} \geq \frac{9}{100 m^*}.
  \]
\end{restatable}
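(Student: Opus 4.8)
We want to show that if the posterior $\lambda$ is "spread out" (no radius-$(2\eta+\eps)$ ball has $80\%$ mass), then there is a query distribution $q$ that simultaneously makes $\E_{x\sim q}[r(x)]$ large while keeping $\max_x q(x)/\D_X(x)$ small, with the quantitative tradeoff governed by $m^*$. The plan is to exhibit such a $q$ by taking it to be the query distribution of the \emph{optimal} algorithm $\mathcal{A}$ achieving $m^*$ queries (at success probability $\tfrac{99}{100}$, with parameters $\eta,\eps$), suitably averaged. Intuitively: the optimal algorithm can't be relying on any single point too heavily relative to $\D_X$ — otherwise an adversary could corrupt that point (this is exactly what the noise budget $\eta$ buys the adversary) — so its induced query distribution already has bounded $\max_x q(x)/\D_X(x)$. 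And it must be querying informative points, i.e., points with large $r(x)$ under $\lambda$, because otherwise (as in the realizable intuition sketched above) too large a fraction of $h\sim\lambda$ agree with all its queries, so it can't distinguish them and can't succeed with probability $\tfrac{99}{100}$.

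Concretely, I would proceed as follows. (1) Fix $\lambda$ as in the hypothesis. Run the optimal algorithm $\mathcal{A}$ in a "thought experiment" where the true hypothesis $h^*$ is drawn from $\lambda$ and every query is answered by the label $h^*(x)$ (the realizable-consistent answer for that $h^*$), \emph{except} that we will also need to reason about what happens when an adversary corrupts up to an $\eta$ fraction. Let $q$ be the distribution over $\Xcal$ given by the (expected, over $\mathcal A$'s randomness and over $h^*\sim\lambda$) empirical distribution of the $m^*$ queried points, i.e. $q(x) = \frac{1}{m^*}\E[\#\{i : x_i = x\}]$. (2) Bound $\max_x q(x)/\D_X(x)$: if this ratio were large at some $x_0$, the expected number of queries to $x_0$ would exceed $m^* \D_X(x_0) \cdot (\text{large})$; an adversary with budget $\eta$ can afford to flip the label at $x_0$ precisely when $\D_X(x_0)$ is not too large, and this flipped answer is consistent with a large-$\lambda$-mass alternative hypothesis, which should let us argue $\mathcal A$ fails with probability more than $\tfrac{1}{100}$ — contradiction. (3) Bound $\E_{x\sim q}[r(x)]$ from below: if it were below $\tfrac{9}{100 m^*}$ then $\sum_{i=1}^{m^*}\E[r(x_i)] < \tfrac{9}{100}$, so in the realizable thought experiment the fraction of $h\sim\lambda$ consistent with \emph{all} of $\mathcal A$'s queries is (in expectation) at least $1-\tfrac{9}{100}$, which together with the no-$80\%$-ball assumption means $\mathcal A$ cannot pin down $h^*$ to within the target ball with probability $\tfrac{99}{100}$ — contradiction. (4) Combine: the $q$ we constructed satisfies both bounds, so $\E_{x\sim q}[r(x)] - \tfrac{1}{10}\eta\max_x q(x)/\D_X(x) \geq \tfrac{9}{100 m^*}$ after tuning the constants hidden in steps (2) and (3).

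The main obstacle is step (2): making precise the adversarial argument that $\mathcal A$ cannot put too much query mass (relative to $\D_X$) on any point. The subtlety is that $\mathcal A$ is adaptive — which point it queries $i$-th depends on earlier answers — so "the expected number of queries to $x_0$" must be handled carefully, and the adversary's corruption at $x_0$ changes the transcript and hence the downstream queries. I expect the right move is to argue contrapositively about a \emph{fixed} bad point $x_0$: construct two distributions $\D, \D'$ with marginal $\D_X$, agreeing except that $\D'$ flips the label at $x_0$ to point toward a ball $B'$ of large $\lambda$-mass disjoint (in the relevant sense) from where the $\D$-answer points; both have a hypothesis within error $\eta$ (this uses the no-$80\%$-ball condition to find well-separated heavy balls, plus the slack between $\eta$ and $2\eta+\eps$); then $\mathcal A$ must succeed on both, but if it queries $x_0$ rarely it gets essentially the same transcript on both and outputs the same $\wh h$, which can't be within $\eta+\eps$ of the truth for both $\D$ and $\D'$. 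Quantifying "rarely" versus "$\D_X(x_0)$" to get the clean $\tfrac{1}{10}\eta$ coefficient, and verifying the $\tfrac{99}{100}$ success probability survives the union over the two scenarios, is where the real calculation lives; the reduction and constant-tuning in steps (3)–(4) should be comparatively routine given the realizable intuition already sketched in the overview.
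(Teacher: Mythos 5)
Your high-level intuition is right and your step (3) essentially matches the paper's argument for the $\eta=0$ part: run the optimal algorithm, feed it the majority label on every query, note that the resulting transcript (and hence output) is a fixed distribution that by the no-heavy-ball assumption succeeds on at most $80\%$ of $h\sim\lambda$, and conclude that the expected number of minority labels seen, $m^*\E_{x\sim q}[r(x)]$, must be at least a constant. (One detail: the paper defines $q$ from the run where \emph{all answers are the majority label}, not from the run answered by $h^*(x)$; this is what makes $q$ a fixed distribution and makes the ``behaves identically until the first minority label'' argument go through cleanly despite adaptivity.)

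The genuine gap is in your steps (2) and (4). You try to prove two \emph{separate} bounds for the optimal algorithm's own query distribution $q$: that $\E_q[r]$ is large and that $\max_x q(x)/\D_X(x)$ is small. The second claim is false in general and is not what the lemma asserts: the optimal algorithm is free to spend a query on a point of tiny $\D_X$-mass, making $q(x_0)/\D_X(x_0)$ arbitrarily large, and the lemma's inequality then simply fails for that particular $q$ --- the lemma only claims \emph{some} $q$ works. The paper's proof handles this differently. It parameterizes the adversary by a corruption function $g:X\to[0,1]$ that flips minority labels toward the majority, subject to the budget $\E_{x\sim\D_X}[g(x)r(x)]\le\eta/10$; the transcript argument then yields the single combined inequality $\E_{x\sim q}[r(x)(1-g(x))]\ge\frac{9}{100m^*}$ for \emph{every} such $g$, with $q$ the algorithm-derived distribution. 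It then identifies the worst-case $g_q$ (a threshold rule on $q(x)/\D_X(x)$) and, crucially, \emph{modifies} $q$ by reducing its mass on points whose density ratio exceeds the threshold $\tau_q$ and redistributing it --- this only increases the objective and forces $\max_x q(x)/\D_X(x)=\tau_q$, at which point $\E_q[r\,g_q]$ equals exactly $\frac{\eta}{10}\max_x\frac{q(x)}{\D_X(x)}$ and the stated inequality drops out. Without this minimax/threshold analysis and the modification of $q$, your two bounds cannot be ``added'' in step (4), and your proposed indistinguishability argument at a single point $x_0$ neither rules out concentrated queries by the optimal algorithm nor produces the $\frac{\eta}{10}$ coefficient.
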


At a very high level, the proof is: imagine $h^* \sim \lambda$.  If
the algorithm only sees the majority label $y$ on every query it
performs, then its output $\wh{h}$ is independent of $h^*$ and cannot
be valid for more than 80\% of inputs by the ball assumption; hence a
99\% successful algorithm must have a 19\% chance of seeing a minority
label.  But for $m^*$ queries $x$ drawn with marginal distribution
$q$, without noise the expected number of minority labels seen is
$m^*\E[r(x)]$, so $\E[r(x)] \gtrsim 1/m^*$.  With noise, the adversary
can corrupt the minority labels in $h^*$ back toward the majority,
leading to the given bound.

The query distribution optimizing~\eqref{eq:potential} has a simple structure: take a threshold $\tau$ for $r(x)$, sample from $\mathcal{D}_x$ conditioned on $r(x) > \tau$, and possibly sample $x$ with $r(x) = \tau$ at a lower rate. This means the algorithm can efficiently find the optimal $q$.

Except for the caveat about $\lambda$ not already concentrating in a
small ball, applying Lemma~\ref{lem:rm} combined
with~\eqref{eq:potential} shows that $\log \lambda(h^*)$ grows by
$\Omega(\frac{1}{m^*})$ in expectation for each query.  It starts out
at $\log \lambda(h^*) = -\log H$, so after $O(m^* \log H)$ queries we
would have $\lambda(h^*)$ being a large constant in expectation (and
with high probability, by Freedman's inequality for concentration of
martingales).  Of course $\lambda(h^*)$ can't grow past $1$, which
features in this argument in that once $\lambda(h^*) > 80\%$, a small
ball \emph{will} have large probability and Lemma~\ref{lem:rm} no
longer applies, but at that point we can just output any hypothesis in
the heavy ball.

\begin{figure}
\centering
\begin{tabular}{c|c|c|}
& $\lambda(h)$ & Values $h(x)$\\
\hline
$h_1$ & 0.9 & 1111 1111\\
$h_2$ & $0.1 - 10^{-6}$ & 1111 0000\\
$h_3$ & $10^{-6}$ & 0000 1110\\
\hline
$y$ &  & 0000 1111
\end{tabular}
\caption{An example demonstrating that the weight of the true hypothesis can decrease if $\lambda$ is concentrated on the wrong ball.  In this example, the true labels $y$ are closest to $h_3$.  But if the prior $\lambda$ on hypotheses puts far more weight on $h_1$ and $h_2$, the algorithm will query uniformly over where $h_1$ and $h_2$ disagree: the second half of points.  Over this query distribution, $h_1$ is more correct than $h_3$, so the weight of $h_3$ can actually \emph{decrease} if $\lambda(h_1)$ is very large.}
\label{fig:hard}
\end{figure}

\paragraph{Handling noise: the challenge.}  There is one omission in
the above argument that is surprisingly challenging to fix, and ends
up requiring significant changes to the algorithm: if at an
intermediate step $\lambda_i$ concentrates in the \emph{wrong} small
ball, the algorithm will not necessarily make progress.  It is
entirely possible that $\lambda_i$ concentrates in a small ball, even
in the first iteration---perhaps $99\%$ of the hypotheses in $H$ are
close to each other.  And if that happens, then we will have
$r(x) \leq 0.01$ for most $x$, which could make the RHS
of~\eqref{eq:potential} negative for all $q$.

In fact, it seems like a reasonable Bayesian-inspired algorithm really
must allow $\lambda(h^*)$ to decrease in some situations.  Consider
the setting of Figure\redtext{~\ref{fig:hard}}.  We have three hypotheses,
$h_1, h_2,$ and $h_3$, and a prior
$\lambda = (0.9, 0.099999, 10^{-6})$.  Because $\lambda(h_3)$ is so
tiny, the algorithm presumably should ignore $h_3$ and query
essentially uniformly from the locations where $h_1$ and $h_2$
disagree.  In this example, $h_3$ agrees with $h_1$ on all but an
$\eta$ mass in those locations, so even if
$h^* = h_3$, the query distribution can match $h_1$ perfectly and
not $h_3$.  Then $w(h_1)$ stays constant while $w(h_3)$ shrinks.
$w(h_2)$ shrinks much faster, of course, but since the denominator is dominated by $w(h_1)$ ,
$\lambda(h_3)$ will still shrink.  However, despite
$\lambda(h_3)$ shrinking, the algorithm is still making progress in
this example: $\lambda(h_2)$ is shrinking fast, and once it becomes
small relative to $\lambda(h_3)$ then the algorithm will start
querying points to distinguish $h_3$ from $h_1$, at which point
$\lambda(h_3)$ will start an inexorable rise.

Our solution is to ``cap'' the large density balls in $\lambda$,
dividing their probability by two, when applying Lemma~\ref{lem:rm}.  Our algorithm maintains a set
$S \subseteq H$ of the ``high-density region,'' such that the
capped distribution:
\[
  \overline{\lambda}(h) :=
  \begin{cases}
    \frac{1}{2}\lambda(h) & h \in S\\
    \lambda(h) \cdot \frac{1 - \frac{1}{2}\Pr[h \in S]}{1 - \Pr[h \in S]} & h \notin S
  \end{cases}
\]
has no large ball. Then Lemma~\ref{lem:rm} applies to
$\overline{\lambda}$, giving the existence of a query distribution $q$ so that
the corresponding $\overline{r}(x)$ is large.  We then define the potential function
\begin{equation}\label{Equation:Potential}
  \phi_i(h^*) := \log \lambda_{i}(h^*) + \log \frac{\lambda_i(h^*)}{\sum_{h \notin S_i} \lambda_i(h)}
\end{equation}
for $h^* \notin S_i$, and $\phi_i = 0$ for $h^* \in S_i$.  We show that
$\phi_i$ grows by $\Omega(\frac{1}{m^*})$ in expectation in each
iteration.  Thus, as in the example of
Figure~\ref{fig:hard}, either $\lambda(h^*)$ grows as
a fraction of the whole distribution, or as a fraction of the
``low-density'' region.

If at any iteration we find that $\overline{\lambda}$ has some heavy
ball $B(\mu, 2\eta + \eps)$ so Lemma~\ref{lem:rm} would not apply, we
add $B\bra{\mu', 6\eta + 3\eps}$ to $S$, where
$B\bra{\mu',2\eta+\eps}$ is the heaviest ball before capping.  We
show that this ensures that no small heavy ball exists in the capped
distribution $\overline{\lambda}$.  Expanding $S$ only increases the
potential function, and then the lack of heavy ball implies the
potential will continue to grow.

Thus the potential~\eqref{Equation:Potential} starts at $-2\log |H|$, and grows by $\Omega(\frac{1}{m^*})$ in each iteration.  After $O(m^* \log H)$ iterations, we will have $\phi_i \geq 0$ in
expectation (and with high probability by Freedman's inequality).
This is only possible if $h^* \in S$, which means that one of the
centers $\mu$ of the balls added to $S$ is a valid answer.

In fact, with some careful analysis we can show that with $1-\delta$
probability that one of the \emph{first} $O(\log \frac{H}{\delta})$ balls added to
$S$ is a valid answer.  The algorithm can then check all the centers
of these balls, using the following active agnostic learning algorithm:

\begin{theorem}\label{thm:slowbound}
  Active agnostic learning can be solved for $\eps = 3 \eta$ with
  $O\bra{\abs{H} \log \frac{\abs{H}}{\delta}}$ samples.
\end{theorem}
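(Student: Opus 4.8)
The plan is to reduce to a simple tournament among all $|H|$ hypotheses, spending enough queries on each to estimate its error to within $\pm \eta$. Concretely, for each $h \in H$ we want to test whether $\err(h) \le \eta$ or $\err(h) > 2\eta$ (a gap problem that suffices since we only need to output \emph{some} $\wh h$ with $\err(\wh h) \le 3\eta = \eta + \eps$, and the promise guarantees a true $h^*$ with $\err(h^*) \le \eta$ exists). First I would draw $k = O(\log\frac{|H|}{\delta})$ points $x_1,\dots,x_k$ i.i.d.\ from $\D_X$ and query each of them once, receiving labels $y_1,\dots,y_k$; note these labels can be \emph{shared} across all hypotheses, so the total query budget is $k = O(\log\frac{|H|}{\delta})$ --- wait, that is too few to beat noise for all $|H|$ hypotheses simultaneously. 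The right budget is $k = O(|H|\log\frac{|H|}{\delta})$ shared samples is still not what the statement says; re-reading, the bound $O(|H|\log\frac{|H|}{\delta})$ is exactly what a \emph{shared} sample of that size gives, so I would draw $k = \Theta(|H|\log\frac{|H|}{\delta})$ points from $\D_X$, query each once, and for every $h$ compute the empirical error $\wh{\err}(h) = \frac1k\sum_{i} 1[h(x_i)\neq y_i]$.

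The key steps, in order: (i) sample $k = \Theta(|H|\log\frac{|H|}{\delta})$ points i.i.d.\ from $\D_X$ and query their labels, using $k$ queries total; (ii) for each $h\in H$, by a Chernoff/Hoeffding bound, $|\wh{\err}(h) - \err(h)| \le \eta/2$ with probability $1 - \frac{\delta}{|H|}$, since $k \gtrsim \frac{1}{\eta^2}\log\frac{|H|}{\delta}$ --- here I should double-check that $|H| \gtrsim \frac1{\eta^2}$ is safe to assume, and if not, replace the bound by $\max(|H|, \frac1{\eta^2})\log\frac{|H|}{\delta}$; (iii) union bound over all $h\in H$ so that \emph{every} empirical error is accurate to $\pm\eta/2$ simultaneously with probability $1-\delta$; (iv) output $\wh h := \arg\min_{h\in H}\wh{\err}(h)$. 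On the good event, $\wh{\err}(h^*)\le \eta + \eta/2$, so $\wh{\err}(\wh h)\le \tfrac32\eta$, hence $\err(\wh h)\le \wh{\err}(\wh h) + \eta/2 \le 2\eta \le \eta + \eps$.

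The main obstacle is just getting the sample-size bookkeeping to land exactly on $O(|H|\log\frac{|H|}{\delta})$ rather than on something with an extra $\frac{1}{\eta^2}$ factor. The resolution is the standard observation that active agnostic learning is only interesting when $\eta \gtrsim 1/|H|$ in the regime this helper lemma is used (it is called on a set of $O(\log\frac{|H|}{\delta})$ candidate ball-centers, not on all of $H$, and in the application $\eta$ is bounded below in terms of the relevant cover size); alternatively one states the bound as $O\bigl((|H| + \tfrac1{\eta^2})\log\tfrac{|H|}{\delta}\bigr)$ and absorbs it. Everything else is a routine Hoeffding-plus-union-bound argument, and no adaptivity or cleverness in the query distribution is needed --- uniform sampling from $\D_X$ suffices because we are no longer trying to be competitive with $m^*$, only to certify one good hypothesis among a known finite list.
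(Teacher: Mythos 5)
There is a genuine gap, and it is exactly the one you flagged but did not resolve: the passive ``sample from $\D_X$, Hoeffding, union bound'' approach needs $k \gtrsim \frac{1}{\eta^2}\log\frac{|H|}{\delta}$ labeled points to distinguish error $\eta$ from error $2\eta$, and that $\frac{1}{\eta^2}$ factor cannot be absorbed into $O(|H|\log\frac{|H|}{\delta})$. Your suggested escape ($\eta \gtrsim 1/|H|$ in the intended application) gives only $\frac{1}{\eta^2} \lesssim |H|^2$, not $|H|$; worse, in the paper this theorem is invoked on a candidate list of size $|C| = O(\log\frac{|H|}{\delta})$ with $\eta$ possibly arbitrarily small (even $\eta \to 0$), so a $\frac{1}{\eta^2}$ term would be unbounded while the claimed cost $O(|C|\log\frac{|C|}{\delta})$ stays tiny. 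The statement as written has no dependence on $\eta$ at all, so any argument whose sample size scales with $\frac{1}{\eta^2}$ cannot prove it. Your closing remark that ``no adaptivity or cleverness in the query distribution is needed'' is the precise point of failure: with only passive uniform queries you are doing passive agnostic learning and you must pay the $\frac{1}{\eta^2}$.

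The fix is to use the active-query power via a pairwise elimination tournament with \emph{conditional} sampling. While there exist $h, h'$ in the current pool with $\Pr_{x\sim\D_X}[h(x)\neq h'(x)] \ge 3\eta$, draw $O(\log\frac{|H|}{\delta})$ queries from $\D_X$ conditioned on the disagreement region $\{x : h(x)\neq h'(x)\}$ and discard whichever of the two is wrong on at least half of them. The point is that on this conditional distribution the error gap is a \emph{constant}, not $\eta$: if $h^*$ (with $\err(h^*)\le\eta$) is one of the pair, its conditional error rate is at most $\eta/(3\eta) = 1/3$, and since the other hypothesis disagrees with it on every point of that region, the other's conditional error rate is at least $2/3$. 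Hence $O(\log\frac{|H|}{\delta})$ queries suffice to keep $h^*$ with probability $1 - \delta/|H|$; each round removes one hypothesis, so there are at most $|H|$ rounds and $O(|H|\log\frac{|H|}{\delta})$ queries total, and when no far pair remains every survivor is within $3\eta$ of $h^*$, giving error at most $\eta + 3\eta = \eta + \eps$.
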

\begin{proof}
  The algorithm is the following. Take any pair $h, h'$ with
  $\norm{h - h'} \geq 3\eta$. Sample $O\bra{\log \frac{\abs{H}}{\delta}}$
  observations randomly from $(x \sim \D_x \mid h(x) \neq h'(x))$.
  One of $h, h'$ is wrong on at least half the queries; remove it from
  $H$ and repeat.  At the end, return any remaining $h$.

  To analyze this, let $h^* \in H$ be the hypothesis with error
  $\eta$.  If $h^*$ is chosen in a round, the other $h'$ must have
  error at least $2 \eta$.  Therefore the chance we remove $h^*$ is at
  most $\delta / \abs{H}$.  In each round we remove a hypothesis, so
  there are at most $\abs{H}$ rounds and at most $\delta$ probability
  of ever crossing off $h^*$.  If we never cross off $h^*$, at the end
  we output some $h$ with $\norm{h - h^*} \leq 3 \eta$, which gives
  $\eps = 3 \eta$.
\end{proof}

The linear dependence on $\abs{H}$ makes the
Theorem~\ref{thm:slowbound} algorithm quite bad in most
circumstances, but the dependence \emph{only} on $\abs{H}$ makes it
perfect for our second stage (where we have reduced to
$O(\log \abs{H})$ candidate hypotheses).

Overall, this argument gives an
$O\bra{m^* \log \frac{\abs{H}}{\delta} + \log \frac{\abs{H}}{\delta}\log \frac{\log
  \abs{H}}{\delta}}$ sample algorithm for agnostic active learning.
One can simplify this bound by observing that the set of centers $C$
added by our algorithm form a packing, and must therefore all be
distinguishable by the optimal algorithm, so $m^* \geq \log C$.
This gives a bound of
\[
  O\bra{(m^* + \log \frac{1}{\delta}) \log \frac{\abs{H}}{\delta}}.
\]
By starting with an $\eta$-net of size $N$, we can reduce $\abs{H}$ to
$N$ with a constant factor increase in $\eta$.

With some properly chosen constants $c_4$ and $c_5$, the entire algorithm is formally described in Algorithm~\ref{Alg:SAAAL}.
\paragraph{Remark 1:} As stated, the algorithm requires knowing $m^*$ to set the target sample complexity / number of rounds $k$.  This restriction could be removed with the following idea.  $m^*$ only enters the analysis through the fact that $O\bra{\frac{1}{m^*}}$ is a lower bound on the expected increase of the potential function in each iteration.  However, the algorithm \emph{knows} a bound on its expected increase in each round $i$; it is the value
\[
\tau_i=\max_q\E_{x \sim q}[\overline{r}_{i,S_i}(x)] - \frac{c_4}{20} \eta \max_x \frac{q(x)}{\D_X(x)}.
\]
optimized in the algorithm.  Therefore, we could use an adaptive termination criterion that stops at iteration $k$ if $\sum_{i=1}^k\tau_i\ge O(\log\frac{|H|}{\delta})$. This will guarantee that when terminating, the potential will be above 0 with high probability so our analysis holds.

\paragraph{Remark 2:} The algorithm's running time is polynomial in $|H|$.  This is in general not avoidable, since the input is a truth table for $H$.  The bottleneck of the computation is the step where the algorithm checks if the heaviest ball has mass greater than 80\%. This step could be accelerated by randomly sampling hypothesis and points to estimate and find heavy balls; this would improve the dependence to nearly linear in $|H|$.  If the hypothesis class has some structure, like the binary search example, the algorithm can be implemented more efficiently.

\begin{algorithm}
\caption{Competitive Algorithm for Active Agnostic Learning}\label{Alg:SAAAL}
\begin{algorithmic}
\State Compute a $2\eta$ maximal packing $H'$
\State Let $w_0=1$ for every $h\in H'$.
\State $S_0\gets \emptyset$
\State $C\gets \emptyset$
\For {$i=1,\dots,k=O\bra{m^*\log\frac{|H'|}{\delta}}$}
\State Compute $\lambda_i(h)=\frac{w_{i-1}(h)}{\sum_{h\in H}w_{i-1}(h)}$ for every $h\in H$
\If{there exists $c_4\eta+c_5\eps$ ball with probability $>80\%$ over $\overline{\lambda}_{i,S_{i-1}}$}
\State $S_i\gets S_i\cup B\bra{\mu',3c_4\eta+3c_5\eps}$ where $B\bra{\mu',c_4\eta+c_5\eps}$ is the heaviest radius $c_4\eta+c_5\eps$ ball over $\lambda_i$
\State $C\gets C\cup\{\mu'\}$
\Else
\State $S_i\gets S_{i-1}$
\EndIf
\State Compute $\overline{\lambda}_{i,S_i} = 
  \begin{cases}
    \frac{1}{2}\lambda_i(h) & h \in S_i\\
    \lambda_i(h) \cdot \frac{1 - \frac{1}{2}\Pr_{h\sim\lambda_i}[h \in S_i]}{1 - \Pr_{h\sim\lambda_i}[h \in S_i]} & h \notin S_i
  \end{cases}$

\State Compute $\overline{r}_{i,S_i}(x)= \min\curly{ \E_{h\sim \overline{\lambda}_{i,S_i}}[h(x)], 1 - \E_{h\sim \overline{\lambda}_{i,S_i}}[h(x)]}$ for every $x\in\Xcal$
\State Find a query distribution by solving
\begin{equation}\label{eq:optimization}
q^*=\max_q\E_{x \sim q}[\overline{r}_{i,S_i}(x)] - \frac{c_4}{20} \eta \max_x \frac{q(x)}{\D_X(x)}
\end{equation}
\State Query $x\sim q^*$, getting label $y$
\State Set $w_i(h) =
  \begin{cases}
    w_{i-1}(h) & \text{if } h(x) = y\\
    e^{-\alpha} w_{i-1}(h) & \text{if } h(x) \neq y
  \end{cases}$ for every $h\in H'$
\EndFor
\State Find the best hypothesis $\hat{h}$ in $C$ using the stage two algorithm in Theorem~\ref{thm:slowbound}
\State \Return $\hat{h}$
\end{algorithmic}
\end{algorithm}

\paragraph{Generalization for Better Bounds.}  To get a better
dependence for 1d threshold functions, we separate out the
Lemma~\ref{lem:rm} bound on~\eqref{eq:potential} from the analysis of
the algorithm given a bound on~\eqref{eq:potential}.  Then for
particular instances like 1d threshold functions, we get a better
bound on the algorithm by giving a larger bound
on~\eqref{eq:potential}.

\begin{restatable}{theorem}{thmupperbeta}\label{thm:upperbeta}
  Suppose that $\D_x$ and $H$ are such that, for any distribution
  $\lambda$ over $H$ such that no radius-$\bra{c_4\eta + c_5\eps}$ ball has
  probability more than $80\%$, there exists a distribution $q$ over
  $X$ such that
  \[
    \E_{x \sim q}[r(x)] - \frac{c_4}{20} \eta \max_x \frac{q(x)}{\D_x(x)} \geq \beta
  \]
  for some $\beta > 0$.  Then for $\eps \ge c_1\eta$, $c_4\ge300$, $c_5=\frac{1}{10}$ and $c_1\ge90c_4$, let $N=N(H,\D_x,\eta)$
  be the size of an $\eta$-cover of $H$.  Algorithm~\ref{Alg:SAAAL}
  solves $(\eta, \eps, \delta)$ active agnostic learning with
  $O\bra{\frac{1}{\beta}\log \frac{N}{\delta} +  \log \frac{N}{\delta} \log \frac{\log
    N}{\delta}}$ samples.
\end{restatable}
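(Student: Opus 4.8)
The plan is to run Algorithm~\ref{Alg:SAAAL} and track the potential $\phi_i(h^*)$ of~\eqref{Equation:Potential} along its execution, showing (i) it starts at roughly $-2\log N$, (ii) it increases by $\Omega(\beta)$ in expectation in each iteration, and (iii) hence, by a one-sided martingale (Freedman) bound, it exceeds $0$ within $k=O\bra{\frac{1}{\beta}\log\frac{N}{\delta}}$ iterations with probability $1-\delta/3$. Since each summand of~\eqref{Equation:Potential} is nonpositive when $h^*\notin S_i$ (because $\lambda_i(h^*)\le 1$ and $\lambda_i(h^*)\le\sum_{h\notin S_i}\lambda_i(h)$) while $\phi_i=0$ when $h^*\in S_i$, the event $\phi_k\ge 0$ forces $h^*\in S_k$, i.e.\ $h^*$ lies within the ball radius added to $S$ of one of the centers $\mu\in C$; a second stage then pins down a good hypothesis among those centers using Theorem~\ref{thm:slowbound}. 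The very first step is the reduction: replace $H$ by a maximal $2\eta$-packing $H'$, which is automatically a $2\eta$-cover so $\abs{H'}\le N$, and observe that the packing point $h'$ nearest $h^*$ has $\err(h')\le 3\eta$; distributions over $H'\subseteq H$ inherit the theorem's hypothesis. The constant constraints ($c_5=\frac{1}{10}$, $c_4\ge 300$, $c_1\ge 90c_4$, $\eps\ge c_1\eta$) are there exactly so that, after blow-up by the ball radii and by the constant factor of Theorem~\ref{thm:slowbound}, the hypothesis finally returned still has error at most $\eta+\eps$.

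The per-iteration drift splits into two cases according to the $S$-update the algorithm performs in that iteration. If afterward $\overline{\lambda}_{i,S_i}$ has no $\bra{c_4\eta+c_5\eps}$-ball of mass over $80\%$, then the theorem's hypothesis applied to $\overline{\lambda}_{i,S_i}$ yields a query distribution attaining objective value at least $\beta$ in~\eqref{eq:optimization}; since the algorithm optimizes precisely~\eqref{eq:optimization}, its $q^*$ does at least as well, and re-running the computation behind~\eqref{eq:potential} for $\phi$ in place of $\log\lambda(h^*)$ — the capping only costs constant factors, which is exactly why~\eqref{eq:optimization} carries the conservative penalty $\frac{c_4}{20}\eta$ (with $c_4\ge 300$) rather than the $2.3\eta$ of~\eqref{eq:potential} — gives an expected gain of $0.9\alpha\beta=\Omega(\beta)$. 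If instead $\overline{\lambda}_{i,S_i}$ still has a heavy ball, then a heavy ball was already present in $\overline{\lambda}_{i,S_{i-1}}$, so the algorithm enlarged $S$; since $h^*\notin S_i$, enlarging $S$ only shrinks $\sum_{h\notin S_i}\lambda_i(h)$ and hence only increases $\phi$, and the query step in that iteration can be afforded to lose at most $O(1)$ in expectation.

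The heart of the argument — and the step I expect to be the main obstacle — is to show that the second case arises only $O\bra{\log\frac{N}{\delta}}$ times, equivalently that only the first $O\bra{\log\frac{N}{\delta}}$ centers in $C$ ever need to be checked. This forces one to handle the Figure~\ref{fig:hard} phenomenon directly: $\lambda(h^*)$ can genuinely decrease when $\lambda$ concentrates in a wrong ball, and the heaviest $\lambda_i$-ball and the heaviest $\overline{\lambda}_{i,S_{i-1}}$-ball need not coincide. I would establish two facts. First, the centers added to $C$ form a $\bra{2c_4\eta+2c_5\eps}$-packing: if a later heavy-ball center were within $2c_4\eta+2c_5\eps$ of an earlier $\mu'$, its $\bra{c_4\eta+c_5\eps}$-ball would lie inside the ball $B\bra{\mu',3c_4\eta+3c_5\eps}\subseteq S$ and hence have capped mass at most $\frac{1}{2}<80\%$, a contradiction. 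Second, each $S$-enlargement that does not already capture $h^*$ contracts $\sum_{h\notin S}\lambda_i(h)$ by a constant factor: a persistent heavy ball in the capped distribution must carry a constant fraction of the low-density mass $\sum_{h\notin S}\lambda_i(h)$ and forces $\Pr_{\lambda_i}[S]$ to be large, in which regime the heaviest $\lambda_i$-ball cannot be disjoint from that heavy ball, so its $3\times$ enlargement absorbs it. Since $\lambda_i(h^*)\ge e^{-\alpha E}/\abs{H'}$ — where the number $E$ of errors $h^*$ incurs is itself kept small because~\eqref{eq:optimization}'s penalty bounds $\max_x q^*(x)/\D_X(x)$ — the low-density mass can contract by a constant factor at most $O\bra{\log\frac{N}{\delta}}$ times before $h^*$ enters $S$, bounding both the number of second-case iterations and the number of candidate centers.

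It remains to assemble the pieces. The per-iteration increments of $\phi$ are bounded below by $-O(1)$ (the $e^{-\alpha}$ reweighting shifts $\log\lambda(h^*)$ by at most $\alpha$, and the $S$-enlargement contribution only helps the lower tail) and have bounded conditional variance, so Freedman's inequality yields $\max_{i\le k}\phi_i\ge 0$ with probability $\ge 1-\delta/3$ once $k=O\bra{\frac{1}{\beta}\bra{\log N+\log\frac{1}{\delta}}}=O\bra{\frac{1}{\beta}\log\frac{N}{\delta}}$: the at most $O\bra{\log\frac{N}{\delta}}$ second-case iterations subtract only $O\bra{\log\frac{N}{\delta}}$ from the running total, which is dominated by the $\Omega(\beta k)$ gain of the first-case iterations. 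Hence $h^*\in S_k$, so $\norm{h^*-\mu}$ is small for one of the $O\bra{\log\frac{N}{\delta}}$ centers retained; running the Theorem~\ref{thm:slowbound} algorithm on this candidate class of size $O\bra{\log\frac{N}{\delta}}$ costs $O\bra{\log\frac{N}{\delta}\cdot\log\frac{\log N}{\delta}}$ further samples and, with probability $\ge 1-\delta/3$, returns $\widehat h$ with $\err\bra{\widehat h}\le\eta+\eps$. A union bound over the three failure events gives overall success probability $1-\delta$ and total sample complexity $O\bra{\frac{1}{\beta}\log\frac{N}{\delta}+\log\frac{N}{\delta}\log\frac{\log N}{\delta}}$, as claimed.
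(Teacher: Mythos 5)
Your proposal is correct and follows essentially the same route as the paper: reduce to a $2\eta$-packing of size at most $N$, show the capped potential gains $\Omega(\alpha\beta)$ per iteration (substituting the $\beta$-hypothesis for Lemma~\ref{lem:rm} in Lemma~\ref{lem:Deltai}), apply Freedman's inequality, bound $|C|$ via the constant potential jump forced by each ball addition together with a martingale bound on the number of errors $h^*$ incurs, and finish with Theorem~\ref{thm:slowbound}. The only organizational difference is your ``second case'' in which a heavy ball survives the $S$-update: the paper's Lemma~\ref{Lemma:BallCapping} shows this never occurs (the $3\times$ enlargement of the heaviest uncapped ball always removes all heavy balls from the capped distribution), so that branch is vacuous.
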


\begin{corollary}\label{corollary:binary}
  There exists a constant $c_1 > 1$ such that, for $1d$ threshold functions and $\eps > c_1\eta$,
  Algorithm~\ref{Alg:SAAAL} solves $(\eta, \eps, \delta)$ active agnostic
  learning with
  $O\bra{\log \frac{1}{\eps\delta} \log \frac{\log
    \frac{1}{\eps}}{\delta}}$ samples.
\end{corollary}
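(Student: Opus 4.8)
The plan is to derive the corollary from Theorem~\ref{thm:upperbeta}, applied to the class $H$ of $1$d threshold functions. This needs two ingredients: a bound on the cover size $N=N(H,\D_x,\cdot)$, and a constant lower bound $\beta=\Omega(1)$ on $\E_{x\sim q}[r(x)]-\frac{c_4}{20}\eta\max_x\frac{q(x)}{\D_x(x)}$ over query distributions $q$, valid for every $\lambda$ with no $80\%$-heavy radius-$(c_4\eta+c_5\eps)$ ball. Since $N(H,\D_x,\eta)$ blows up when $\eta$ is tiny or zero, I first reduce to $\eta=\Theta(\eps)$: run Algorithm~\ref{Alg:SAAAL} with noise parameter $\eta_0:=\max(\eta,\ \eps/(2c_1))$ and target accuracy $\eps_0:=\eta+\eps-\eta_0$. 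Because $\err(h^*)\le\eta\le\eta_0$ this is a valid instance; any valid output has error $\le\eta_0+\eps_0=\eta+\eps$, as required; and $\eps\ge c_1\eta$ together with $c_1>1$ gives $\eps_0\ge c_1\eta_0$, so Theorem~\ref{thm:upperbeta} applies to it. Reparametrizing $x$ by the CDF of $\D_x$ so that $\D_x=\mathrm{Unif}[0,1]$ and $\norm{h_\tau-h_\sigma}=|\tau-\sigma|$ (atoms of $\D_x$ only shrink $H$ and are handled similarly), the evenly spaced thresholds $h_0,h_{\eta_0},h_{2\eta_0},\dots$ form an $\eta_0$-cover, so $N:=N(H,\D_x,\eta_0)=O(1/\eta_0)=O(1/\eps)$.

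The \emph{heart} of the argument is the bound $\beta=\Omega(1)$. In the CDF coordinates a distribution $\lambda$ over thresholds is a distribution $\mu$ over $[0,1]$ with CDF $F$, and $r(x)=\min\curly{F(x),\,1-F(x)}$; writing $\rho:=c_4\eta_0+c_5\eps_0$, the assumption that no radius-$\rho$ ball is $80\%$-heavy says exactly that every interval of length $2\rho$ has $\mu$-mass at most $0.8$. Let $m$ be a median of $\mu$. Then $\mu([m-\rho,m+\rho])\le0.8$, so the $\mu$-mass strictly outside this window is at least $0.2$; by symmetry assume at least $0.1$ of it lies above $m+\rho$. Since $\mu$ is supported on $[0,1]$ this forces $m+\rho<1$, and it gives $F(m+\rho)\le0.9$, while $F(m)\ge 1/2$ by definition of median. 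Hence on $J:=[m,\ m+\rho]\subseteq[0,1]$ we have $r(x)\ge\min\curly{1/2,\ 1/10}=1/10$. Taking $q$ uniform on $J$ (equivalently, $\D_x$ conditioned on the preimage of $J$) yields $\E_{x\sim q}[r(x)]\ge 1/10$ and $\max_x q(x)/\D_x(x)=1/\rho$, so
\[
  \E_{x\sim q}[r(x)]-\frac{c_4}{20}\eta_0\max_x\frac{q(x)}{\D_x(x)}\ \ge\ \frac{1}{10}-\frac{c_4\eta_0}{20\rho}\ \ge\ \frac{1}{10}-\frac{1}{20}\ =\ \frac{1}{20},
\]
using $\rho\ge c_4\eta_0$. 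The symmetric case (excess mass below $m-\rho$) uses $J=[m-\rho,\ m]$, which likewise lies in $[0,1]$. Thus the hypothesis of Theorem~\ref{thm:upperbeta} holds with $\beta=1/20$.

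Plugging $\beta=\Omega(1)$ and $N=O(1/\eps)$ into Theorem~\ref{thm:upperbeta} gives sample complexity
\[
  O\bra{\frac{1}{\beta}\log\frac{N}{\delta}+\log\frac{N}{\delta}\log\frac{\log N}{\delta}}=O\bra{\log\frac{1}{\eps\delta}+\log\frac{1}{\eps\delta}\log\frac{\log(1/\eps)}{\delta}}=O\bra{\log\frac{1}{\eps\delta}\log\frac{\log(1/\eps)}{\delta}},
\]
the first term being absorbed in the regime of small $\eps$; this is the claimed bound, with $c_1$ taken to be the constant of Theorem~\ref{thm:upperbeta} (so $c_1\ge 90c_4$).

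I expect the only real difficulty to be the $\beta=\Omega(1)$ step, and within it the bookkeeping needed to make the median/interval argument airtight: atoms of $\mu$ are the typical case and cause no trouble, but one must still handle atoms of $\D_x$ and the cases where the median $m$ sits near $0$ or $1$, checking that the chosen interval $J$ really lies inside $[0,1]$, has $\D_x$-measure $\rho$, and has $r\ge 1/10$ throughout. The reduction to $\eta=\Theta(\eps)$, the cover bound, and the arithmetic of the final substitution are all routine.
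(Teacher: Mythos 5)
Your proposal is correct and follows essentially the same route as the paper: reduce to $\eta=\Theta(\eps)$, bound the cover size by $N=O(1/\eps)$, and verify the hypothesis of Theorem~\ref{thm:upperbeta} with $\beta=\Omega(1)$ by exhibiting an interval of query points on which $r(x)\ge 0.1$ and whose $\D_x$-mass is at least $c_4\eta+c_5\eps$. The only (cosmetic) difference is that the paper conditions $\D_x$ on the interval between the 10th and 90th percentiles of $\lambda$, whereas you use a length-$\rho$ window adjacent to the median; both yield the same constants up to trivial changes.
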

\begin{proof}
  Because the problem is only harder if $\eta$ is larger, we can raise
  $\eta$ to be $\eta=\eps/C$, where $C > 1$ is a sufficiently large
  constant that Theorem~\ref{thm:upperbeta} applies.  Then $1d$
  threshold functions have an $\eta$-cover of size $N = O(1/\eps)$.
  To get the result by Theorem~\ref{thm:upperbeta}, it suffices to
  show $\beta = \Theta(1)$.

  Each hypothesis is of the form $h(x) = 1_{x \geq \tau}$, and
  corresponds to a threshold $\tau$.  So we can consider $\lambda$ to
  be a distribution over $\tau$.

  Let $\lambda$ be any distribution for which no radius-$R$ with
  probability greater than $80\%$ ball exists, for
  $R = c_4 \eta + c_5\eps$.  For any percent $p$ between $0$
  and $100$, let $\tau_p$ denote the pth percentile of $\tau$ under
  $\lambda$ (i.e., the smallest $t$ such that
  $\Pr[\tau \leq t] \geq p/100$).  By the ball assumption,
  $ \tau_{10}$ and $\tau_{90}$ do not lie in the same radius-$R$ ball.
  Hence $\norm{h_{\tau_{10}} - h_{\tau_{90}}} > R$, or
  \[
    \Pr_x[ \tau_{10} \leq x < \tau_{90}] > R.
  \]
  We let $q$ denote $(\D_x \mid \tau_{10} \leq x < \tau_{90})$.  Then
  for all $x \in \supp(q)$ we have $r(x) \geq 0.1$ and
  \[
    \frac{q(x)}{D_x(x)} = \frac{1}{\Pr_{x \sim D_x}[x \in \supp(q)]} < \frac{1}{R}.
  \]
  Therefore we can set
  \[
    \beta = \E_{x \sim q}[r(x)] - \frac{c_4}{20} \eta \max_x \frac{q(x)}{D_x(x)} \geq 0.1 - \frac{c_4\eta}{20(c_4\eta+c_5\eps)} \gtrsim 1,
  \]
  as needed.
\end{proof}

\section{Proof of Lemma~\ref{lem:rm}}

\lemrm*
\begin{proof}
  WLOG, we assume that
  $\Pr_{h\sim\lambda}\squr{h(x)=0}\ge\Pr_{h\sim\lambda}\squr{h(x)=1}$
  for every $x\in\Xcal$. This means $r(x) = \E_{h\sim
    \lambda}[h(x)]$. This can be achieved by flipping all $h(x)$ and
  observations $y$ for all $x$ not satisfying this property; such an
  operation doesn't affect the lemma statement.

  We will consider an adversary defined by a function
  $g: X \to [0, 1]$. The adversary takes a hypothesis $h \in H$ and outputs a
  distribution over $y \in \{0, 1\}^X$ such that
  $0 \leq y(x) \leq h(x)$ always, and
  $\err(h) = \E_{x,y}[ h(x) - y(x)] \leq \eta$ always.  For a
  hypothesis $h$, the adversary sets $y(x) = 0$ for all
  $x$ with $h(x) = 0$, and $y(x) = 0$ independently with probability
  $g(x)$ if $h(x) = 1$---unless $\E_x[h(x) g(x)] > \eta$, in which
  case the adversary instead simply outputs $y = h$ to ensure the
  expected error is at most $\eta$ always.

  We consider the agnostic learning instance where $x \sim \D_x$,
  $h \sim \lambda$, and $y$ is given by this adversary.  Let $\Acal$
  be an $(\eta, \eps)$ algorithm which uses $m$ measurements and
  succeeds with $99\%$ probability.  Then it must also succeed with
  $99\%$ probability over this distribution.  For the algorithm to
  succeed on a sample $h$, its output $\wh{h}$ must have
  $\norm{h - \wh{h}} \leq 2 \eta + \eps$.  By the bounded ball
  assumption, for any choice of adversary, no fixed output succeeds
  with more than $80\%$ probability over $h \sim \lambda$.

  Now, let $\Acal_0$ be the behavior of $\Acal$ if it observes $y=0$
  for all its queries, rather than the truth; $\Acal_0$ is independent
  of the input.  $\Acal_0$ has some distribution over $m$ queries,
  outputs some distribution of answers $\wh{h}$.  Let
  $q(x) = \frac{1}{m} \Pr[\Acal_0 \text{ queries } x]$, so $q$ is a
  distribution over $\Xcal$.  Since $\Acal_0$ outputs a fixed
  distribution, by the bounded ball assumption, for $h \sim \lambda$
  and arbitrary adversary function $g$,
  \[
    \Pr_{h \sim \lambda}[\Acal_0 \text{ succeeds}] \leq 80\%.
  \]
  But $\Acal$ behaves identically to $\Acal_0$ until it sees its first
  nonzero $y$.  Thus,
  \[
    99\% \leq \Pr[\Acal \text{ succeeds}] \leq \Pr[\Acal_0 \text{ succeeds}]  + \Pr[\Acal \text{ sees a non-zero } y]
  \]
  and so
  \[
    \Pr[\Acal \text{ sees a non-zero } y] \geq 19\%.
  \]
  Since $\Acal$ behaves like $\Acal_0$ until the first nonzero, we have
  \begin{align*}
    19\% &\leq \Pr[\Acal \text{ sees a non-zero } y]\\
         &= \Pr[\Acal_0 \text{ makes a query $x$ with $y(x) = 1$}]\\
         &\leq \E[\text{Number queries $x$ by } \Acal_0 \text{ with } y(x) = 1]\\
         &= m  \E_{h \sim \lambda} \E_y \E_{x \sim q} [ y(x)].
  \end{align*}
  As an initial note, observe that $\E_{h,y}[y(x)] \leq \E_h[h(x)] = r(x)$ so
  \[
    \E_{x \sim q}[r(x)] \geq \frac{0.19}{m}.
  \]
  Thus the lemma statement holds for $\eta = 0$.

  \paragraph{Handling $\eta > 0$.}
  Consider the behavior when the adversary's function
  $g: X \to [0, 1]$ satisfies
  $\E_{x \sim \D_x}[g(x) r(x)] \leq \eta/10$. \redtext{We denote the class of all adversary satisfying this condition as $G$.} We have that
  \[
    \E_{h \sim \lambda}\squr{\E_{x \sim \D_x}[ g(x) h(x)] } = \E_{x \sim \D_x}[ g(x) r(x)] \leq \eta/10.
  \]
  Let $E_h$ denote the event that
  $\E_{x \sim \D_x}[ g(x) h(x)] \leq \eta$, so
  $\Pr[\overline{E}_h] \leq 10\%$.  Furthermore, the adversary is
  designed such that under $E_h$, $\E_y[y(x)] = h(x)(1-g(x))$ for
  every $x$.  Therefore:
  \begin{align*}
    0.19 &\leq \Pr[\Acal_0 \text{ makes a query $x$ with $y(x) = 1$}]\\
         &\leq \Pr[\overline{E}_h] + \Pr[\Acal_0 \text{ makes a query $x$ with $y(x) = 1$} \cap E_h]\\
         &\leq 0.1 + \E[\text{Number queries $x$ by } \Acal_0 \text{ with } y(x) = 1 \text{ and } E_h]\\
         &= 0.1 + m \E_h\squr{\mathbbm{1}_{E_h} \E_{x \sim q}[ \E_y y(x)]}\\
         &= 0.1 + m \E_h\squr{\mathbbm{1}_{E_h} \E_{x \sim q}[ h(x)(1-g(x))]}\\
         &\leq 0.1 + m \E_{x \sim q}[\E_h[h(x)](1-g(x))]\\
         &= 0.1 + m \E_{x \sim q}[r(x)(1-g(x))].
  \end{align*}
    Thus
  \begin{align}
    \max_q \min_{g\in G} \E_{x \sim q}[ r(x)(1-g(x))] \geq \frac{9}{100m}\label{eq:qg}
  \end{align}
  over all distributions $q$ and functions $g: X \to [0, 1]$
  satisfying $\E_{x \sim \D_x}[g(x) r(x)] \leq \eta/10$.  We now try to
  understand the structure of the $q, g$ optimizing the LHS
  of~\eqref{eq:qg}.

  Let $g^*$ denote an optimizer of the objective.  First, we show that the constraint is tight, i.e., $\E_{x \sim \D_x}[g^*(x) r(x)]=\eta/10$. Since increasing $g$ improves the constraint, the only way this could not happen is if the maximum possible function, $g(x) = 1$ for all $x$, lies in $G$.  But for this function, the LHS of~\eqref{eq:qg} would be
  $0$, which is a contradiction; hence we know increasing $g$ to improve the objective at some point hits the constraint, and hence $\E_{x \sim \D_x}[g^*(x) r(x)]=\eta/10$.

  For any $q$, define $\tau_q \geq 0$ to be the minimum threshold such that
  \[
    \E_{x \sim \D_x}\squr{ r(x) \cdot 1_{\frac{q(x)}{\D_X(x)} > \tau_q}} < \eta/10.
  \]
  and define $g_q$ by
  \[
    g_q(x) :=  
    \begin{cases}
      1 & \frac{q(x)}{\D_X(x)} > \tau_q\\
      \alpha & \frac{q(x)}{\D_X(x)} = \tau_q\\
      0 & \frac{q(x)}{\D_X(x)} < \tau_q
    \end{cases}
  \]
  where $\alpha \in [0, 1]$ is chosen such that
  $\E_{x \sim \D_x}[r(x) g_q(x)] = \eta/10$; such an $\alpha$ always
  exists by the choice of $\tau_q$.

  For any $q$, we claim that the optimal $g^*$ in the LHS of~\eqref{eq:qg} is $g_q$.  It needs to maximize
  \[
    \E_{x \sim \D_X}\squr{ \frac{q(x)}{\D_X(x)} r(x) g(x)}
  \]
  subject to a constraint on $\E_{x \sim \D_X}[ r(x) g(x)]$; therefore
  moving mass to points of larger $\frac{q(x)}{\D_X(x)}$ is always an
  improvement, and $g_q$ is optimal.

  We now claim that the $q$ maximizing~\eqref{eq:qg} has
  $\max \frac{q(x)}{\D_X(x)} = \tau_q$.  If not, some $x'$ has $\frac{q(x')}{\D_X(x')} > \tau_q$.
  Then $g_q(x') = 1$, so the $x'$ entry contributes nothing to
  $\E_{x \sim q}[r(x)(1 - g_q(x))]$; thus decreasing $q(x)$ halfway
  towards $\tau_q$ (which wouldn't change $g_q$), and adding the savings
  uniformly across all $q(x)$ (which also doesn't change $g_q$) would
  increase the objective.

  So there exists a $q$ satisfying~\eqref{eq:qg} for which
  $\Pr\squr{\frac{q(x)}{\D_X(x)} > \tau_q} = 0$, and therefore the set
  $T = \curly{x \mid \frac{q(x)}{\D_X(x)} = \tau_q}$ satisfies
  $\E_{\D_X}\squr{r(x)\mathbbm{1}_{x \in T}}\geq\eta/10$ and a $g_q$ minimizing~\eqref{eq:qg} is
  \[
    g_q(x) = \frac{\eta}{10} \frac{\mathbbm{1}_{x \in T}}{\E_{\D_X}[ r(x)\mathbbm{1}_{x \in T}]}.
  \]
  Therefore
  \begin{align*}
    \E_{x \sim q}[ r(x) g_q(x)] &= \E_{x \sim \D_X}\squr{ \frac{q(x)}{\D_X(x)} r(x) \frac{\eta}{10}  \frac{\mathbbm{1}_{x \in T}}{\E_{\D_X}[ r(x)\mathbbm{1}_{x \in T}]}}\\
                                &= \frac{\eta}{10} \max_x \frac{q(x)}{\D_X(x)}
  \end{align*}
  and so by~\eqref{eq:qg},
  \[
    \E_{x \sim q}[ r(x) ] -  \frac{\eta}{10} \max_x \frac{q(x)}{\D_X(x)} \geq \frac{9}{100 m}
  \]
  as desired.
\end{proof}

\section{Conclusion}
We have given an algorithm that solves agnostic active learning with (for constant $\delta$) at most an $O(\log |H|)$ factor more queries than the optimal algorithm.  It is NP-hard to improve upon this $O(\log |H|)$ factor in general, but for specific cases it can be avoided.  We have shown that 1d threshold functions, i.e. binary search with adversarial noise, is one such example where our algorithm matches the performance of disagreement coefficient-based methods and is within a $\log \log \frac{1}{\eps}$ factor of optimal.

\section{Acknowledgments}
Yihan Zhou and Eric Price were supported by
NSF awards CCF-2008868, CCF-1751040 (CAREER), and the NSF AI Institute for Foundations of Machine Learning (IFML).

\bibliographystyle{plainnat}
\bibliography{ref}

\appendix
\newpage

\section{Query Complexity Upper Bound}
In this section we present the whole proof of the query complexity upper bound of Algorithm \ref{Alg:SAAAL}, as stated in Theorem \ref{thm:main}.
\subsection{Notation}\label{Section:Notation}
We remind the readers about some definitions first. Remember that $w_i(h)$ denote the weight of hypothesis $h$ in iteration $i$ and $\lambda_{i,S}(h)=\frac{w_i(h)}{\sum_{h'\in S}w_i(h')}$ for some $S\subseteq H$ denote the proportion of $h$ in $S$. We view $\lambda_{i,S}$ as a distribution of hypotheses in $S$ so for $h\notin S$, $\lambda_{i,S}(h)=0$. For a set $S \subseteq H$ of hypotheses, we define
$w_i(S) := \sum_{h \in S} w(h)$ and $\lambda_{i}(h) = \lambda_{i,H}(h)$.

Define $r_{\lambda,h^*}(x) := \Pr_{h \sim \lambda}[h(x) \neq h^*(x)]$, and
$r_{\lambda}(x) = \min_{y \in \{0, 1\}} \Pr_{h \sim \lambda}[h(x) \neq y]$, so $r_{\lambda}(x) = \min(r_{\lambda,h^*}(x), 1-r_{\lambda,h^*}(x))$.

Define
\begin{equation}
  \overline{\lambda}_{i,S}(h) := \frac{1}{2} \lambda_i(h) + \frac{1}{2} \lambda_{i, H \setminus S}(h) = 
  \begin{cases}
    \frac{1}{2}\lambda_i(h) & h \in S\\
    \lambda_i(h) \cdot \frac{1 - \frac{1}{2}\Pr_{h\sim\lambda_i}[h \in S]}{1 - \Pr_{h\sim\lambda_i}[h \in S]} & h \notin S
  \end{cases}
\end{equation}
as the "capped" distribution in iteration $i$.

Finally, for notational convenience define $r_{i,S}\coloneqq r_{\lambda_{i,S}}$, $r_{i,S,h}\coloneqq r_{\lambda_{i,S},h}$ and $\overline{r}_{i,S}\coloneqq r_{\overline{\lambda}_{i,S}}$.

The main focus of our proof would be analyzing the potential function
\[
\phi_i(h^*)=
\begin{cases}
\log \lambda_{i}(h^*) + \log \lambda_{i,H\setminus S_i}(h^*) & h^*\notin S_i\\
0 & h^*\in S_i,
\end{cases}
\]
where $h^*$ is the best hypothesis in $H$. We would like to show that $\phi_{i+1}(h^*)-\phi_i(h^*)$ is growing at a proper rate in each iteration. We pick $S_i$ to be an expanding series of sets, i.e., $ S_i\subseteq S_{i+1}$ for any $i\ge1$. However, the change of the "capped" set $S_i$ makes this task challenging. Therefore, we instead analyze the following quantity defined as
\[
    \Delta_i(h^*) := 
    \begin{cases}
    \log \frac{\lambda_{i+1}(h^*)}{\lambda_{i}(h^*)} + \log \frac{\lambda_{i+1,H\setminus S_i}(h^*)}{\lambda_{i,H \setminus S_i}(h^*)} & h^*\notin S_i\\
    0 & h^*\in S_i,
    \end{cases}
  \]
and $\phi_{i+1}(h^*)-\phi_i(h^*)=\Delta_i(h^*)+\log\frac{\lambda_{i+1,H\setminus S_{i+1}}(h^*)}{\lambda_{i+1,H\setminus S_i}(h^*)}$ if $h^*\notin S_{i+1}$. Further, we define $\psi_k(h^*):=\sum_{i<k}\Delta_i(h^*)$ so by definition $\phi_k(h^*)= \phi_0(h^*)+ \psi_k(h^*) + \sum_{i < k} \log \frac{\lambda_{i+1, H\setminus S_{i+1}}(h^*)}{\lambda_{i+1, H \setminus S_i}(h^*)}$ if $h^*\notin S_{i+1}$. In the following text, we will drop the parameter $h^*$ when the context is clear and just use $\phi_i$, $\Delta_i$ and $\psi_i$ instead. We use $\mathcal{F}_i$ to denote the $\sigma$-algebra of all information including queried points and their labels up to iteration $i$. 

\subsection{Potential Growth}
We will lower bound the conditional per iteration potential increase by first introducing a lemma that relates the potential change to the optimization problem~(\ref{eq:optimization}).

\begin{lemma}\label{lem:lgrowth}
  Assume that $\err(h^*)\le\eta$, then for any set $S$ of hypotheses containing $h^*$ and query distribution $q$, we have
  \[
    \E\squr{\log \frac{\lambda_{i+1,S}(h^*)}{\lambda_{i,S}(h^*)}\bigg|\mathcal{F}_i} \geq 0.9 \alpha\left(\E_{x \sim q}[r_{i,S,h}(x)] - 2.3 \eta \max_x \frac{q(x)}{D_X(x)}\right)
  \]
  for $\alpha \leq 0.2$.  Moreover,
  \[
    \E\squr{\max\curly{0, \log \frac{\lambda_{i+1,S}(h^*)}{\lambda_{i,S}(h^*)}}\bigg|\mathcal{F}_i}  \leq \alpha \E_{x \sim q}[r_{i,S,h^*}(x)].
  \]
\end{lemma}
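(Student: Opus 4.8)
The plan is to expand the one-step change of $\log\lambda_{i,S}(h^*)$ caused by the multiplicative-weights update and then take the conditional expectation over the random query $x\sim q$ and its label $y$. First I would record the exact form of the update: since $\frac{\lambda_{i+1,S}(h^*)}{\lambda_{i,S}(h^*)}=\frac{w_{i+1}(h^*)/w_i(h^*)}{w_{i+1}(S)/w_i(S)}$, with $w_{i+1}(h^*)=e^{-\alpha\mathbbm{1}[h^*(x)\ne y]}w_i(h^*)$ and $w_{i+1}(S)=w_i(S)\bra{1-(1-e^{-\alpha})p}$ where $p:=\Pr_{h\sim\lambda_{i,S}}[h(x)\ne y]$, I obtain
\[
\log\frac{\lambda_{i+1,S}(h^*)}{\lambda_{i,S}(h^*)}=-\alpha\,\mathbbm{1}[h^*(x)\ne y]-\log\bra{1-(1-e^{-\alpha})p}.
\]
Conditioned on $\mathcal{F}_i$ the quantities $\lambda_{i,S}$, $r_{i,S,h^*}$ and $S$ are fixed, so the only randomness is $x\sim q$ and then $y\mid x$. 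Writing $e(x):=\Pr[Y\ne h^*(x)\mid X=x]$, we have $\E_{y\mid x}\mathbbm{1}[h^*(x)\ne y]=e(x)$, and conditioning on whether $y=h^*(x)$ gives $\E_{y\mid x}[p]=(1-e(x))\,r_{i,S,h^*}(x)+e(x)\bra{1-r_{i,S,h^*}(x)}$, which lies within $\pm e(x)$ of $r_{i,S,h^*}(x)$. The assumption $\err(h^*)\le\eta$ says exactly $\E_{x\sim\D_X}[e(x)]\le\eta$, so a change of measure gives $\E_{x\sim q}[e(x)]\le\eta\rho$ with $\rho:=\max_x\frac{q(x)}{\D_X(x)}$ (if $q$ is not absolutely continuous with respect to $\D_X$ then $\rho=\infty$ and both inequalities are vacuous).

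For the first inequality I would bound the log term via $-\log(1-z)\ge z$, so $-\log\bra{1-(1-e^{-\alpha})p}\ge(1-e^{-\alpha})p$, and take expectations to get
\[
\E\squr{\log\frac{\lambda_{i+1,S}(h^*)}{\lambda_{i,S}(h^*)}\,\Big|\,\mathcal{F}_i}\ge-\alpha\eta\rho+(1-e^{-\alpha})\bra{\E_{x\sim q}[r_{i,S,h^*}(x)]-\eta\rho}.
\]
Using the elementary estimates $0.9\alpha\le1-e^{-\alpha}\le\alpha$ for $0<\alpha\le0.2$ (together with $\E_{x\sim q}[r_{i,S,h^*}(x)]\ge0$), the right-hand side is at least $0.9\alpha\,\E_{x\sim q}[r_{i,S,h^*}(x)]-2\alpha\eta\rho\ge0.9\alpha\bra{\E_{x\sim q}[r_{i,S,h^*}(x)]-2.3\,\eta\rho}$, which is what is claimed (the constant $2.3$ leaves some slack).

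The second inequality is the delicate one, and I expect the cancellation inside it to be the main point. The key estimate is that, by convexity of $t\mapsto e^{-\alpha t}$, $1-(1-e^{-\alpha})p=(1-p)e^{0}+p\,e^{-\alpha}\ge e^{-\alpha p}$, hence $-\log\bra{1-(1-e^{-\alpha})p}\le\alpha p$. I then split on the query outcome. If $y=h^*(x)$ then $p=r_{i,S,h^*}(x)$, so the log-ratio is at most $\alpha\,r_{i,S,h^*}(x)$. If $y\ne h^*(x)$ then $p=1-r_{i,S,h^*}(x)$, so the log-ratio is at most $-\alpha+\alpha\bra{1-r_{i,S,h^*}(x)}=-\alpha\,r_{i,S,h^*}(x)\le0$: the renormalization gain is exactly swamped by the $e^{-\alpha}$ factor that hits $w_i(h^*)$ itself, so the positive part vanishes on that event. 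Thus, pointwise, $\max\curly{0,\log\frac{\lambda_{i+1,S}(h^*)}{\lambda_{i,S}(h^*)}}\le\alpha\,r_{i,S,h^*}(x)\,\mathbbm{1}[y=h^*(x)]\le\alpha\,r_{i,S,h^*}(x)$, and taking $\E[\,\cdot\mid\mathcal{F}_i]$ over $x\sim q$ finishes it. The first inequality is otherwise routine bookkeeping with $1-e^{-\alpha}$ and the change of measure for $e(x)$; the crux is spotting the cancellation in the $y\ne h^*(x)$ case, supported by the Jensen estimate on $1-(1-e^{-\alpha})p$.
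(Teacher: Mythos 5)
Your proof is correct and follows essentially the same route as the paper: expand the one-step log-ratio from the multiplicative-weights update, case-split on whether the observed label agrees with $h^*$, and apply first-order bounds on the logarithm together with the change of measure $\E_{x\sim q}[e(x)]\le\eta\max_x q(x)/\D_X(x)$. Your handling of the second inequality via the Jensen bound $1-(1-e^{-\alpha})p\ge e^{-\alpha p}$ is in fact a slightly cleaner justification of the step the paper states more loosely, but it is the same argument in substance.
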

\begin{proof}
  For notational convenience, define $\wt{r}(x) := r_{i,S,h^*}(x)$.

  Observe that
  \[
    \frac{\lambda_{i,S}(h^*)}{\lambda_{i+1,S}(h^*)} = \frac{w_{i}(h^*)}{w_{i+1}(h^*)}\frac{\sum_{h \in S} w_{i+1,S}(h)}{\sum_{h \in S} w_{i,S}(h)} = \frac{w_{i}(h^*)}{w_{i+1}(h^*)}\E_{h \sim \lambda_{i,S}}\squr{\frac{w_{i+1,S}(h)}{w_{i,S}(h)}}.
\]

Let
$p(x) = \Pr_{y \sim (Y \mid X)}[y \neq h^*(x)]$ denote the probability
of error if we query $x$, so
\[
  \E_{x \sim \D_X}[p(x)] \leq \eta.
\]
Suppose we query a point $x$ and do not get an error.  Then
the hypotheses that disagree with $h^*$ are downweighted by an
$e^{-\alpha}$ factor, so
\[
  \frac{\lambda_{i,S}(h^*)}{\lambda_{i+1,S}(h^*)} = \E_{h \sim \lambda_{i,S}}[ 1 + (e^{-\alpha} - 1) 1_{h(x) \neq h^*(x)}] = 1 - (1-e^{-\alpha}) \wt{r}(x).
\]
On the other hand, if we do get an error then the disagreeing hypotheses are effectively upweighted by $e^{\alpha}$:
\[
  \frac{\lambda_{i,S}(h^*)}{\lambda_{i+1,S}(h^*)} = 1 + (e^{\alpha} - 1) \wt{r}(x).
\]
Therefore
\begin{align}
  &\E_{y \mid x}\squr{\log \frac{\lambda_{i+1,S}(h^*)}{\lambda_{i,S}(h^*)}\bigg|\mathcal{F}_i} \notag\\
  &= -(1 - p(x)) \log \left(1 - (1-e^{-\alpha}) \wt{r}(x)\right) - p(x) \log \left(1  + (e^{\alpha} - 1) \wt{r}(x)\right)\label{eq:yxlog}\\
  &\geq (1 - p(x)) (1-e^{-\alpha}) \wt{r}(x) - p(x) (e^{\alpha} - 1) \wt{r}(x)\notag\\
  &= (1 - e^{-\alpha})\wt{r}(x) - p(x)\wt{r}(x) (e^{\alpha} - e^{-\alpha}).\notag
\end{align}
Using that $\wt{r}(x) \leq 1$, we have
\begin{align*}
  \E\squr{\log \frac{\lambda_{i+1,S}(h^*)}{\lambda_{i,S}(h^*)}\bigg|\mathcal{F}_i} &\geq (1 - e^{-\alpha}) \E_{x \sim q}[\wt{r}(x)] - (e^{\alpha} - e^{-\alpha}) \E_{x \sim q}[p(x)]\\
  &\geq  0.9 \alpha \E_{x \sim q}[\wt{r}(x) - 2.3 p(x)],
\end{align*}
where the last step uses $\alpha \leq 0.2$.  Finally,
\[
  \E_{x \sim q}[p(x)] = \E_{x \sim \D_X}\squr{ p(x) \frac{q(x)}{\D_X(x)} } \leq \eta \max_x \frac{q(x)}{\D_X(x)}.
\]
This proves the first desired result.  For the second, note that if we query $x$, then conditioned on $\mathcal{F}_i$
\[
  \max\curly{0, \log \frac{\lambda_{i+1,S}(h^*)}{\lambda_{i,S}(h^*)}} =
  \begin{cases}
    0 & \text{with probability } p(x),\\
    \log (1  + (1 - e^{-\alpha}) \wt{r}(x)) & \text{otherwise.}
  \end{cases}
\]
Since $\log (1 + (1 - e^{-\alpha}) \wt{r}(x)) \leq (1 - e^{-\alpha})\wt{r}(x) \leq \alpha \wt{r}(x)$,
taking the expectation over $x$ gives the result.
\end{proof}

The above lemma, combined with Lemma \ref{lem:rm}, proves the potential will grow at desired rate at each iteration. But remember that Lemma $\ref{lem:rm}$ requires the condition that no ball has probability greater than $80\%$, so we need to check this condition is satisfied. The following lemma shows that if we cap the set $S_i$, then the probability is not concentrated on any small balls. 
\begin{lemma}\label{Lemma:BallCapping}
  In Algorithm~\ref{Alg:SAAAL}, for every iteration $i$, $S_i$ is such that no radius $c_4\eta+c_5\eps$ ball has more than
  $80\%$ probability under $\overline{\lambda}_{i,S_i}$. 
\end{lemma}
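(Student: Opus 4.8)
Write $\rho := c_4\eta + c_5\eps$. The plan is to reduce everything to one elementary observation about the capped distribution: since $\overline{\lambda}_{i,S}$ is by definition the equal mixture $\frac12\lambda_i + \frac12\lambda_{i,H\setminus S}$, for \emph{any} subset $B\subseteq H$ we have $\overline{\lambda}_{i,S}(B) = \frac12\lambda_i(B) + \frac12\lambda_{i,H\setminus S}(B) \le \frac12\lambda_i(B) + \frac12$, so that $\overline{\lambda}_{i,S}(B) > 80\%$ forces $\lambda_i(B) > 60\%$. I would first record this ``a ball that is heavy under the capped distribution is a strict $\lambda_i$-majority'' fact. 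Its immediate consequence is that any two radius-$\rho$ balls with $\lambda_i$-mass more than $60\%$ must intersect (their masses sum to more than $1$), and hence their centers lie within distance $2\rho$ of each other by the triangle inequality for $\norm{\cdot}$.

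Next I would split on the branch taken at iteration $i$. In the ``else'' branch, $S_i = S_{i-1}$, and this branch is entered precisely when no radius-$\rho$ ball has more than $80\%$ mass under $\overline{\lambda}_{i,S_{i-1}} = \overline{\lambda}_{i,S_i}$, so there is nothing to prove. In the ``if'' branch, let $B(\mu,\rho)$ be the ball witnessing the test, so $\overline{\lambda}_{i,S_{i-1}}(B(\mu,\rho)) > 80\%$, and let $B(\mu',\rho)$ be the heaviest radius-$\rho$ ball under $\lambda_i$, so $S_i = S_{i-1}\cup B(\mu',3\rho)$. By the observation, $\lambda_i(B(\mu,\rho)) > 60\%$, and therefore $\lambda_i(B(\mu',\rho)) \ge \lambda_i(B(\mu,\rho)) > 60\%$ as well.

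Then I would finish by contradiction. Suppose some radius-$\rho$ ball $B(\nu,\rho)$ has $\overline{\lambda}_{i,S_i}(B(\nu,\rho)) > 80\%$. The observation gives $\lambda_i(B(\nu,\rho)) > 60\%$, so $\lambda_i(B(\mu',\rho)) + \lambda_i(B(\nu,\rho)) > 1$ and the two balls share some $h \in H$; then $\norm{\mu' - \nu} \le \norm{\mu' - h} + \norm{h - \nu} \le 2\rho$, so every $h' \in B(\nu,\rho)$ satisfies $\norm{h' - \mu'} \le 3\rho$, i.e.\ $B(\nu,\rho) \subseteq B(\mu',3\rho) \subseteq S_i$. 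But a ball contained in $S_i$ merely has its mass halved by capping, $\overline{\lambda}_{i,S_i}(B(\nu,\rho)) = \frac12\lambda_i(B(\nu,\rho)) \le \frac12 < 80\%$, a contradiction. Hence no radius-$\rho$ ball exceeds $80\%$ under $\overline{\lambda}_{i,S_i}$.

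The step I expect to carry the real content is the design point rather than any calculation: it is not obvious a priori that capping the \emph{single} heaviest ball suffices, since one might worry that mass re-concentrates on a different ball after capping and one is forced to cap balls repeatedly within an iteration. The ``heavy-implies-$\lambda_i$-majority'' observation is exactly what rules this out, as it makes a heavy ball unique up to radius $\rho$, and the remaining delicacy is only that the inflated radius $3\rho$ placed into $S_i$ is large enough to absorb any radius-$\rho$ ball whose center is within $2\rho$ of $\mu'$ --- which is the triangle-inequality step above. One should also dispose of the degenerate case $\lambda_i(H\setminus S_i) = 0$ (so $S_i = H$ and $\overline{\lambda}_{i,S_i}$ is not literally a distribution): there every $h\in H$, in particular $h^*$, already lies within $3\rho$ of a center in $C$, so a valid hypothesis is available and this case needs no separate argument.
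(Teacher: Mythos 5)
Your proof is correct and follows essentially the same route as the paper's: your ``heavy under the capped distribution implies $\lambda_i$-mass above $60\%$'' observation is exactly the mechanism behind the paper's Claim~\ref{Claim:ballweight}, and your triangle-inequality containment of any offending radius-$(c_4\eta+c_5\eps)$ ball inside the added radius-$3(c_4\eta+c_5\eps)$ ball (whence its capped mass is at most $1/2$) matches the paper's case analysis, merely phrased as a contradiction rather than a direct split into intersecting and non-intersecting balls.
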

\begin{proof}
If $S_i=S_{i-1}$, then by the construction of $S_i$, there are no radius $c_4\eta+c_5\eps$ balls have probability greater than 80\% under $\overline{\lambda}_{i,S_{i-1}}=\overline{\lambda}_{i,S_i}$. Otherwise, we have $S_{i-1}\neq S_i$ and a ball $B(\mu,3c_4\eta+3c_5\eps)$ is added to $S_i$ in this iteration. We first prove a useful claim below.
\begin{claim}\label{Claim:ballweight}
If a ball $B'=(\mu,3c_4\eta+3c_5\eps)$ is added to $S_i$ at some iteration $i$, $\lambda_i(B(\mu,c_4\eta+c_5\eps))\ge0.6$.
\end{claim}
\begin{proof}
If $B'$ is added to $S_i$ at the iteration $i$, then there exists some ball $D$ with radius $c_4\eta+c_5\eps$ such that $\bar{\lambda}_{i,S_{i-1}}(D)\ge0.8$. If a set of hypotheses gains probability after capping, the gained probability comes from the reduced probability of other hypotheses not in this set. Therefore, the gained probability of any set is upper bounded by half of the probability of the complement of that set before capping. This means $\lambda_i(D)\ge0.6$ because otherwise after capping $\bar{\lambda}_{i,S_{i-1}}(D)<0.8$, which is a contradiction. As a result, $\lambda_i(B(\mu,c_4\eta+c_5\eps))\ge\lambda_i(D)\ge0.6$.
\end{proof}

By Claim~\ref{Claim:ballweight}, the probability of $B(\mu,c_4\eta+c_5\eps)$ is at least $0.6$ over the uncapped distribution $\lambda_i$. So any ball not intersecting $B(\mu,c_4\eta+c_5\eps)$ has probability at most $0.4$ before capping. After capping these balls will have probability no more than $0.7$. At the same time, any ball intersects $B(\mu,c_4\eta+c_5\eps)$ would be completely inside $B(\mu,3c_4\eta+3c_5\eps)$ so its probability would be at most $0.5$ after capping.
\end{proof}
Now we are ready to apply Lemma \ref{lem:lgrowth} and Lemma \ref{lem:rm} except one caution. Remember that in the beginning of the algorithm, we compute a $2\eta$-packing $H'\subseteq H$ of the instance. From the well-known relationship between packing and covering (for example, see \citet[Lemma 4.2.8]{vershynin2018high}), we have $|H'|\le N(H,\eta)$. Every hypothesis in $H$ is within $2\eta$ to some hypothesis in $H'$, so there exists a hypothesis in $H'$ with error less than $3\eta$. This means that the best hypothesis $h^*\in H'$ has error $3\eta$ instead of $\eta$. The following lemma serves as the cornerstone of the proof of the query complexity upper bound, which states that the potential grows at rate $\Omega\bra{\frac{1}{m^*}}$ in each iteration. 
\begin{lemma}\label{lem:Deltai}
Given $c_4\ge300$ and $\err(h^*)\le 3\eta$, there exists a sampling distribution $q$ such that
\[\E[\Delta_i|\mathcal{F}_i]\ge\E\squr{\Delta_i|\mathcal{F}_i}-2\alpha \eta\max_x\frac{q(x)}{\D_X(x)} \gtrsim \frac{\alpha}{m^*\bra{H,\mathcal{D}_X,c_4\eta,c_5\eps-2\eta,\frac{99}{100}}}\quad\text{if}\quad h^*\notin S_i,
  \]
  as well as $\abs{\Delta_i} \leq \alpha$ always and $\var[\Delta_i|\mathcal{F}_i] \le\alpha\E\squr{|\Delta_i||\mathcal{F}_i}\lesssim \alpha\E[\Delta_i|\mathcal{F}_i]$.
\end{lemma}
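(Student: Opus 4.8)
The plan is to read the per-iteration gain of $\Delta_i$ off of Lemma~\ref{lem:lgrowth} and then lower bound it by $\Omega(\alpha/m^*)$ using Lemma~\ref{Lemma:BallCapping} and Lemma~\ref{lem:rm}. First, the case $h^*\in S_i$ is trivial since then $\Delta_i=0$, so assume $h^*\notin S_i$. Then $h^*$ lies in both $H'$ (the $2\eta$-packing the algorithm runs on, which is what $\lambda_i$ is supported on) and $H'\setminus S_i$, so $\Delta_i=\log\frac{\lambda_{i+1}(h^*)}{\lambda_i(h^*)}+\log\frac{\lambda_{i+1,H'\setminus S_i}(h^*)}{\lambda_{i,H'\setminus S_i}(h^*)}$. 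I would apply Lemma~\ref{lem:lgrowth} to each of these two log-ratios --- with $S=H'$ and with $S=H'\setminus S_i$, both times taking $q$ to be the distribution $q^*$ the algorithm actually queries from in iteration $i$ (legitimate, since Lemma~\ref{lem:lgrowth} holds for every query distribution), and with the error bound $\err(h^*)\le 3\eta$. Averaging the two $r$-terms, using $\overline{\lambda}_{i,S_i}=\frac{1}{2}\lambda_i+\frac{1}{2}\lambda_{i,H'\setminus S_i}$ and $r_{\overline{\lambda}_{i,S_i},h^*}(x)\ge\overline{r}_{i,S_i}(x)$ (immediate, since $\overline{r}$ is a minimum over the label), yields
\[
  \E[\Delta_i|\mathcal{F}_i]\ge 1.8\alpha\Big(\E_{x\sim q^*}[\overline{r}_{i,S_i}(x)]-7\eta\max_x\frac{q^*(x)}{\D_X(x)}\Big).
\]
Since $c_4\ge 300$ we have $7\le\frac{c_4}{20}$, so the bracket is at least the optimal value $\tau_i$ of the optimization~\eqref{eq:optimization}, and hence $\E[\Delta_i|\mathcal{F}_i]\ge 1.8\alpha\,\tau_i$.

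Second, I would lower bound $\tau_i$. By Lemma~\ref{Lemma:BallCapping}, no radius-$(c_4\eta+c_5\eps)$ ball has probability more than $80\%$ under $\overline{\lambda}_{i,S_i}$, so Lemma~\ref{lem:rm} applies to $\overline{\lambda}_{i,S_i}$ with error parameter $\frac{c_4}{2}\eta$ and accuracy parameter $c_5\eps$: these are chosen precisely so that $2\cdot\frac{c_4}{2}\eta+c_5\eps=c_4\eta+c_5\eps$ is the capping radius and $\frac{1}{10}\cdot\frac{c_4}{2}\eta=\frac{c_4}{20}\eta$ is the penalty coefficient of~\eqref{eq:optimization}. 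Lemma~\ref{lem:rm} then produces a $q$ that is feasible for~\eqref{eq:optimization} with objective value at least $\frac{9}{100\,m^*(H',\D_X,\frac{c_4}{2}\eta,\,c_5\eps,\,99/100)}$, so $\tau_i$ is at least this much. Finally, a maximal $2\eta$-packing $H'$ is also a $2\eta$-cover of $H$, so any algorithm for $(H,\D_X,\frac{c_4}{2}\eta,c_5\eps-2\eta,99/100)$ solves $(H',\D_X,\frac{c_4}{2}\eta,c_5\eps,99/100)$ by rounding its output into $H'$ at a cost of $2\eta$ in error (valid so long as $c_5\eps>2\eta$, which holds in the regime $\eps\gtrsim\eta$ where this lemma is used); moreover $m^*$ is non-decreasing in its $\eta$ argument and $\frac{c_4}{2}\eta\le c_4\eta$. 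Chaining these gives $m^*(H',\D_X,\frac{c_4}{2}\eta,c_5\eps,99/100)\le m^*(H,\D_X,c_4\eta,c_5\eps-2\eta,99/100)$, hence $\E[\Delta_i|\mathcal{F}_i]\gtrsim\alpha/m^*(H,\D_X,c_4\eta,c_5\eps-2\eta,99/100)$.

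Third, the pointwise and variance bounds. Each of the two log-ratios in $\Delta_i$ equals $\log\frac{1}{1-(1-e^{-\alpha})\wt{r}(x)}$ if the query at $x$ returns $h^*(x)$ and $-\log(1+(e^{\alpha}-1)\wt{r}(x))$ otherwise, where $\wt{r}(x)\in[0,1)$ is the relevant disagreement probability; thus each term lies in $(-\alpha,\alpha)$, so $\abs{\Delta_i}\le 2\alpha$ and $\var[\Delta_i|\mathcal{F}_i]\le\E[\Delta_i^2|\mathcal{F}_i]\le 2\alpha\,\E[\abs{\Delta_i}\,|\,\mathcal{F}_i]$. Also $\Delta_i\ge 0$ exactly when the query is uncorrupted, so $\E[\Delta_i^-|\mathcal{F}_i]\le 2\alpha\Pr[\text{query corrupted}]\le 6\alpha\eta\max_x\frac{q^*(x)}{\D_X(x)}$, whereas the same computation as in step one gives $\E[\Delta_i^+|\mathcal{F}_i]\ge 1.8\alpha\big(\E_{x\sim q^*}[\overline{r}_{i,S_i}(x)]-3\eta\max_x\frac{q^*(x)}{\D_X(x)}\big)$. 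Since $\tau_i>0$ forces $\E_{x\sim q^*}[\overline{r}_{i,S_i}(x)]>\frac{c_4}{20}\eta\max_x\frac{q^*(x)}{\D_X(x)}$ and $c_4\ge 300$, the positive part dominates the negative part by a large constant factor, so $\E[\abs{\Delta_i}\,|\,\mathcal{F}_i]\lesssim\E[\Delta_i|\mathcal{F}_i]$; combined with step two this delivers all three inequalities claimed (reading the middle term of the displayed chain as the idealized, noise-free gain $\E[\Delta_i^+|\mathcal{F}_i]$).

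The hard part is not conceptual --- Lemma~\ref{lem:lgrowth} already converts the potential change into the optimization value, and Lemma~\ref{lem:rm} with Lemma~\ref{Lemma:BallCapping} already lower bounds that value --- but the constant and parameter bookkeeping in steps one and two: one has to make the capping radius $c_4\eta+c_5\eps$, the coefficient $\frac{c_4}{20}$ in~\eqref{eq:optimization}, the $O(\eta)$ noise term of Lemma~\ref{lem:lgrowth} (which is $\approx 7\eta$ here because the best hypothesis in the packing only has error $3\eta$), and the $-2\eta$ lost in passing from $H'$ to $H$ all fit together so that the gain comes out as exactly $\Omega(\alpha/m^*)$ with the asserted $m^*$ parameters. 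That is precisely why $c_4\ge 300$ is needed: $\frac{c_4}{20}$ must beat the $\approx 7\eta$ noise term, the error parameter $\frac{c_4}{2}\eta$ fed to Lemma~\ref{lem:rm} must stay $\le c_4\eta$ so that the $\eta$-monotonicity of $m^*$ is used in the correct direction, and the positive part of $\Delta_i$ must outweigh its negative part for the variance bound.
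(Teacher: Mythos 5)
Your proposal is correct and follows essentially the same route as the paper: apply Lemma~\ref{lem:lgrowth} with error $3\eta$ to both $S=H'$ and $S=H'\setminus S_i$, combine the two terms via $\overline{r}_{i,S_i,h^*}=\tfrac12\bra{r_{i,H,h^*}+r_{i,H\setminus S_i,h^*}}\ge\overline{r}_{i,S_i}$, invoke Lemmas~\ref{Lemma:BallCapping} and~\ref{lem:rm} (with exactly the radius/penalty instantiation you describe) to lower bound the optimized objective by $\Omega(1/m^*)$, pass from $H'$ to $H$ by the packing-to-cover reduction plus monotonicity in $\eta$, and control the variance through $\abs{\Delta_i}=O(\alpha)$ and $\E[\abs{\Delta_i}\,|\,\mathcal{F}_i]\lesssim\E[\Delta_i|\mathcal{F}_i]$. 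The only imprecision is your reading of the middle term of the displayed chain, which is literally $\E[\Delta_i|\mathcal{F}_i]-2\alpha\eta\max_x q(x)/\D_X(x)$ rather than $\E[\Delta_i^+|\mathcal{F}_i]$; your step one covers it verbatim, since subtracting $2\alpha\eta\max_x q(x)/\D_X(x)$ merely raises the noise coefficient from $6.9$ to about $8.1$, still below $c_4/20\ge 15$.
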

\begin{proof}
  For the sake of bookkeeping, we let $m^*=m^*\bra{H,\D_X,c_4\eta,c_5\eps-2\eta,\frac{99}{100}}$ in this proof and the following text. We first bound the expectation. By
  Lemma~\ref{lem:lgrowth} applied to $S \in \{H, H \setminus S_i\}$ with $3\eta$, we have
  \begin{align*}
    \E\squr{\Delta_i|\mathcal{F}_i}-2\alpha \eta\max_x\frac{q(x)}{\D_X(x)} \geq &0.9 \alpha \left( \E_{x \sim q}[r_{i,H,h*}(x) + r_{i,H \setminus S_i,h*}(x)] - 13.8 \eta \max_x \frac{q(x)}{\D_X(x)}\right)\\
    &-2\alpha \eta\max_x\frac{q(x)}{\D_X(x)},
  \end{align*}
  where $q$ is the query distribution of the algorithm at iteration $i$. Now, by the definition of
  \[
    \overline{\lambda}_{i, S} = \frac{1}{2} \lambda_{i} + \frac{1}{2}\lambda_{i, H \setminus S},
  \]
  we have for any $x$ that
  \begin{align*}
    \overline{r}_{i,S_i,h^*}(x) &= \frac{1}{2} (r_{i,h^*}(x) + r_{i, H \setminus S_i,h^*}(x))
  \end{align*}
  and thus
  \begin{align}
    \notag
    &\E\squr{\Delta_i|\mathcal{F}_i}-2\alpha \eta\max_x\frac{q(x)}{\D_X(x)}\\
    &\geq 1.8 \alpha \left( \E_{x \sim q}[\overline{r}_{i,S_i, h^*}(x)] - 6.9 \eta \max_x \frac{q(x)}{\D_X(x)}\right)\label{eq:Deltarh}-2\alpha \eta\max_x\frac{q(x)}{\D_X(x)}\\
    &\geq 1.8 \alpha \left( \E_{x \sim q}[\overline{r}_{i,S_i}(x)] - 8.1 \eta \max_x \frac{q(x)}{\D_X(x)}\right).\notag
  \end{align}
  Algorithm~\ref{Alg:SAAAL} chooses the
  sampling distribution $q$ to maximize $\E_{x \sim q}[\overline{r}_{i,S_i}(x)] - \frac{c_4}{20} \eta \max_x \frac{q(x)}{\D_X(x)}\le\E_{x \sim q}[\overline{r}_{i,S_i}(x)] - 15 \eta \max_x \frac{q(x)}{\D_X(x)}$ because $c_4\ge300$. By Lemma~\ref{Lemma:BallCapping}, $\overline{\lambda}_{i,S_i}$ over $H'$ has no radius-$(c_4\eta+c_5\eps)$ ball with probability larger than 80\%,
  so by Lemma~\ref{lem:rm} $q$ satisfies
  \[
    \E_{x \sim q}[\overline{r}_{i,S_i}(x)] - 15\eta \max_x \frac{q(x)}{\D_X(x)}\ge\E_{x \sim q}[\overline{r}_{i,S_i}(x)] - \frac{c_4}{20} \eta \max_x \frac{q(x)}{\D_X(x)}\gtrsim\frac{1}{m^*\bra{H', \D_X, c_4\eta, c_5\eps, \frac{99}{100}}}.
  \]
  Because $H'\subseteq H$ is a maximal $2\eta$-packing, every hypothesis in $H$ is within $2\eta$ of some hypothesis in $H'$. The problem $\bra{H,\D_X,c_4\eta,c_5\eps-2\eta,\frac{99}{100}}$ is harder than the problem $\bra{H',\D_X,c_4\eta,c_5\eps,\frac{99}{100}}$ because we can reduce the latter to the former by simply adding more hypotheses and solve it then map the solution back by returning the closest hypothesis in $H'$. Hence, $m^*\ge m^*\bra{H',\D_X,c_4\eta,c_5\eps,\frac{99}{100}}$. Therefore,
  \[
    \E\squr{\Delta_i|\mathcal{F}_i}-2\alpha \eta\max_x\frac{q(x)}{\D_X(x)} \geq 1.8 \alpha \bra{\E_{x \sim q}[\overline{r}_{i,S_i}(x)] - 8.1 \eta \max_x \frac{q(x)}{\D_X(x)}} \gtrsim \frac{\alpha}{m^*}.
  \]

  We now bound the variance. The value of $\Delta_i$ may be positive or negative,
  but it is bounded by $\abs{\Delta_i} \leq \alpha$.  Thus
  \[
    \var[\Delta_i|\mathcal{F}_i] \leq \E\squr{\Delta_i^2|\mathcal{F}_i} \leq \alpha \E[\abs{\Delta_i}|\mathcal{F}_i].
  \]
  By Lemma~\ref{lem:lgrowth} and~\eqref{eq:Deltarh} we have
  \begin{align*}
    \notag
    &\E[\abs{\Delta_i}|\mathcal{F}_i] = \E[2\max\curly{\Delta_i, 0} - \Delta_i|\mathcal{F}_i]\\
    &\le 4\alpha\E_{x\sim q}\squr{\overline{r}_{i,S_i}(x)}-1.8 \alpha \left( \E_{x \sim q}[\overline{r}_{i,S_i}(x)] - 8.1 \eta \max_x \frac{q(x)}{\D_X(x)}\right)\\
    &\leq 2.2
    \alpha \bra{\E_{x \sim q}[\overline{r}_{i,S,h^*}(x)] + 6.7 \eta \max_x \frac{q(x)}{\D_X(x)}}\notag\\
    &\le\frac{2.2\alpha}{1.8\alpha}\E\squr{\Delta_i|\mathcal{F}_i}+2.2\alpha\cdot6.9\eta\max_x\frac{q(x)}{\D_X(x)}+2.2\alpha\cdot6.7\eta\max_x\frac{q(x)}{\D_X(x)}\\
    &\leq 1.3 \E[\Delta_i|\mathcal{F}_i] + 30\alpha \eta \max_x \frac{q(x)}{\D_X(x)}.
  \end{align*}
  Since $\E_{x \sim q}[\Delta_i|\mathcal{F}_i] - 2\alpha\eta\max_x \frac{q(x)}{\D_X(x)} \gtrsim \frac{1}{m^*} \geq 0$, we have
  \begin{align*}
    \eta \max_x \frac{q(x)}{\D_X(x)} &\leq \frac{1}{2\alpha}\E_{x\sim q}\squr{\Delta_i|\mathcal{F}_i},
  \end{align*}
  and thus
  \[
    \var[\Delta_i|\mathcal{F}_i]  \le \alpha \E[\abs{\Delta_i}|\mathcal{F}_i] \lesssim \alpha\E\squr{\Delta_i|\mathcal{F}_i}.
  \]
\end{proof}

\subsection{Concentration of potential}
We have showed that the potential will grow at $\Omega\bra{\frac{1}{m^*}}$ per iteration, but only in expectation, while our goal is to obtain a high probability bound. Let $\mu_k := \sum_{i < k} \E[\Delta_i|\mathcal{F}_{i-1}] \gtrsim k /m^*$, then
\begin{align*}
\E\squr{\bra{\psi_k-\mu_k}-\bra{\psi_{k-1}-\mu_{k-1}}|\mathcal{F}_{k-1}}&=\E\squr{\psi_k-\psi_{k-1}|\mathcal{F}_{k-1}}-\bra{\mu_k-\mu_{k-1}}\\
&=\E\squr{\Delta_k|\mathcal{F}_{k-1}}-\E\squr{\Delta_k|\mathcal{F}_{k-1}}\ge 0.
\end{align*}
Apparently $\abs{\psi_k-\mu_k}$ is upper bounded, so $\psi_k-\mu_k$ is a submartingale. To show a high probability bound, we will use Freedman's inequality. A version is stated in \citet{tropp2011freedman}. We slighted modify it so it can be applied to submartingale as the following. 
\begin{theorem}[Freedman's Inequality]\label{thm:freedman}
Consider a real-valued submartingale $\curly{Y_k:k=0,1,2,\cdots}$ that is adapted to the filtration $\mathcal{F}_0\subseteq \mathcal{F}_1\subseteq\mathcal{F}_2\subseteq\cdots\subseteq\mathcal{F}$ with difference sequence $\curly{X_k:k=1,2,3,\cdots}$. Assume that the difference sequence is uniformly bounded:
\[
X_k\le R\text{ almost surely for }k=1,2,3,\cdots
\]
Define the predictable quadratic variation process of the submartingale:
\[
W_k\coloneqq \sum_{j=1}^k\E\squr{X_j^2|\mathcal{F}_{j-1}}\text{ for }k=1,2,3,\cdots
\]
Then, for all $t\ge0$ and $\sigma^2>0$,
\[
\Pr\bra{\exists k\ge0:Y_k\le -t\text{ and }W_k\le \sigma^2}\le \exp\bra{-\frac{t^2/2}{\sigma^2+Rt/3}}.
\]
\end{theorem}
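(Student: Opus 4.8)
The plan is to reduce this submartingale statement to the classical (martingale) Freedman inequality of \citet{tropp2011freedman} by passing to the Doob decomposition and exploiting only the \emph{sign} of the predictable drift. Concretely, write $Y_k = M_k + A_k$, where $A_k := \sum_{j=1}^{k}\E[X_j \mid \mathcal{F}_{j-1}]$ is the predictable compensator and $M_k := Y_k - A_k$. Since $\{Y_k\}$ is a submartingale, $\E[X_j \mid \mathcal{F}_{j-1}] \ge 0$ for every $j$, so $A_k$ is non-negative and non-decreasing with $A_0 = 0$, while $\{M_k\}$ is a genuine martingale with $M_0 = Y_0$ and martingale-difference sequence $D_j := X_j - \E[X_j \mid \mathcal{F}_{j-1}]$.

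Next I would check that the two hypotheses transfer to $\{M_k\}$. Because $X_j \le R$ forces $0 \le \E[X_j \mid \mathcal{F}_{j-1}] \le R$, the increments $D_j$ inherit the one-sided bound needed to run the lower-tail estimate on $M_k$; and because $\E[D_j^2 \mid \mathcal{F}_{j-1}] = \var[X_j \mid \mathcal{F}_{j-1}] \le \E[X_j^2 \mid \mathcal{F}_{j-1}]$, the predictable quadratic variation $V_k := \sum_{j \le k}\E[D_j^2 \mid \mathcal{F}_{j-1}]$ of $\{M_k\}$ satisfies $V_k \le W_k$ pointwise. The crux is then a one-line event containment: on $\{Y_k \le -t\}$ we have $M_k = Y_k - A_k \le Y_k \le -t$ since $A_k \ge 0$, so (using $V_k \le W_k$) the event $\{\exists k \ge 0 : Y_k \le -t \text{ and } W_k \le \sigma^2\}$ is contained in $\{\exists k \ge 0 : M_k \le -t \text{ and } V_k \le \sigma^2\}$. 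Applying the classical Freedman inequality to the martingale $\{M_k\}$ — whose increments are bounded by $R$ and whose predictable quadratic variation is controlled by $\sigma^2$ on this event — bounds the probability of the right-hand event, and hence of the original event, by $\exp\bra{-\frac{t^2/2}{\sigma^2 + Rt/3}}$; the ``for all $k$'' maximal form is inherited verbatim from Tropp's formulation.

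I expect the only delicate point to be the second step: verifying that subtracting the compensator does not inflate either the increment bound or the quadratic variation of the martingale part (for the increments this is immediate from $0\le\E[X_j\mid\mathcal{F}_{j-1}]\le R$; for the variation it is the elementary inequality $\var \le \E[(\cdot)^2]$). Everything substantive in passing from ``martingale'' to ``submartingale'' is captured by the non-negativity of $A_k$, which can only push $Y_k$ above $M_k$ and therefore only shrinks the lower-deviation event, so no new concentration argument beyond the classical one is needed.
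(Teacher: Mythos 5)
Your reduction strategy---the Doob decomposition $Y_k = M_k + A_k$, the containment $\{Y_k \le -t\} \subseteq \{M_k \le -t\}$ coming from $A_k \ge 0$, and $V_k \le W_k$ from variance being at most the second moment---is the right way to pass from Tropp's martingale inequality to the submartingale form, and it supplies the argument the paper leaves implicit (the paper only cites \citet{tropp2011freedman} and asserts the modification without proof).

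However, the step you yourself flag as delicate is exactly where the argument breaks. To bound the lower-deviation event $\{M_k \le -t\}$ you must apply Freedman's inequality to the martingale $-M_k$, whose increments are $-D_j$; the hypothesis Tropp's theorem requires is then $-D_j \le R$, i.e.\ a \emph{lower} bound $D_j \ge -R$. From $X_j \le R$ and $0 \le \E[X_j \mid \mathcal{F}_{j-1}] \le R$ you only obtain the \emph{upper} bound $D_j \le R$, which is the hypothesis for the upper tail of $M_k$, not the lower tail; $D_j$ can still be arbitrarily negative. And with only an upper bound on the increments the stated inequality is genuinely false: take a single mean-zero increment equal to $a$ with probability $1-p$ and to $-L$ with probability $p = a/(L+a)$, so $X \le a = R$ and $\E[X^2] = aL$; then $\Pr(Y_1 \le -L) = a/(L+a)$, which for large $L/a$ vastly exceeds $\exp\bra{-\frac{L^2/2}{aL + aL/3}} = \exp\bra{-\frac{3L}{8a}}$. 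The correct hypothesis for the stated conclusion is $X_k \ge -R$ (or two-sided, $|X_k| \le R$). Under the two-sided bound your argument does go through, with $|D_j| \le 2R$ costing at most a factor of $2$ in $R$; this is harmless in the paper's application, where $|\Delta_i| \le \alpha$ gives two-sided control. So the approach is sound, but you need to correct the direction of the boundedness assumption rather than assert that it transfers as written.
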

Then we can prove a high probability bound as the following.
\begin{lemma}\label{Lemma:Concentration}
  With probability $1-\delta$, $\phi_i = 0$ for some $i=O\bra{m^* \log \frac{\abs{H}}{\delta}}$ so $h^*\in S_i$.
\end{lemma}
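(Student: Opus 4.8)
The plan is to track the potential $\phi_i(h^*)$ of the reference hypothesis $h^*\in H'$ (which satisfies $\err(h^*)\le 3\eta$ because $H'$ is a $2\eta$-packing), use Lemma~\ref{lem:Deltai} to show it has an upward drift of order $\alpha/m^*$ per round as long as $h^*\notin S_i$, and invoke Freedman's inequality (Theorem~\ref{thm:freedman}) to conclude that this drift overcomes the initial deficit $\phi_0=-2\log|H'|$ within $k_0=O(m^*\log\frac{|H'|}{\delta})$ rounds, forcing $h^*\in S_i$ and hence $\phi_i=0$.

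First I would introduce the hitting time $T:=\min\{i:h^*\in S_i\}$; since the sets $S_i$ are increasing, $\phi_i=0$ precisely for $i\ge T$, so the lemma is the statement $\Pr[T>k_0]\le\delta$ for a suitable $k_0=O(m^*\log\frac{|H'|}{\delta})$. The crucial structural fact is a cap on $\psi_k$ on the bad event: for every $k<T$ we have $h^*\notin S_k$, so the telescoping identity $\phi_k=\phi_0+\psi_k+\sum_{i<k}\log\frac{\lambda_{i+1,H\setminus S_{i+1}}(h^*)}{\lambda_{i+1,H\setminus S_i}(h^*)}$ is valid, each of its last summands is nonnegative because enlarging $S$ only shrinks the normalizing denominator, and $\phi_k\le 0$ as a sum of two logarithms of probabilities; therefore $\psi_k\le -\phi_0=2\log|H'|$ on $\{T>k\}$.

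Next I would work with the submartingale $Y_k:=\psi_k-\mu_k$, $\mu_k:=\sum_{i<k}\E[\Delta_i\mid\mathcal{F}_i]$, as noted above. Lemma~\ref{lem:Deltai} supplies the three Freedman ingredients: $|\Delta_i|\le\alpha$, so the increments of $Y$ are bounded by $R=O(\alpha)$; $\var[\Delta_i\mid\mathcal{F}_i]\lesssim\alpha\,\E[\Delta_i\mid\mathcal{F}_i]$, so the predictable quadratic variation satisfies $W_k\lesssim\alpha\,\mu_k$ on every sample path (both sides vanish for $i\ge T$); and $\E[\Delta_i\mid\mathcal{F}_i]\ge 0$ always with $\E[\Delta_i\mid\mathcal{F}_i]\gtrsim\alpha/m^*$ when $h^*\notin S_i$, so $\mu_k$ is nondecreasing and $\mu_k\gtrsim k\alpha/m^*$ on $\{T>k\}$. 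Fix $\mu_{\min}:=C\log\frac{|H'|}{\delta}$ for a large absolute constant $C$ (larger than $4$ and than the implied constants above) and take $k_0=O(m^*\log\frac{|H'|}{\delta})$ large enough that $\mu_{k_0}\ge\mu_{\min}$ on $\{T>k_0\}$. On $\{T>k_0\}$, let $k^*\le k_0$ be the first round with $\mu_{k^*}\ge\mu_{\min}$; since the increments of $\mu$ are at most $2\alpha$ we get $\mu_{k^*}<\mu_{\min}+2\alpha$, and since $k^*<T$ the previous paragraph gives $\psi_{k^*}\le 2\log|H'|$, so $Y_{k^*}\le 2\log|H'|-\mu_{\min}=:-t$ with $t\ge\mu_{\min}/2$ and $W_{k^*}\lesssim\alpha\,\mu_{k^*}\le\sigma^2$ where $\sigma^2:=\Theta(\alpha\mu_{\min})$. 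Thus $\{T>k_0\}\subseteq\{\exists k:\ Y_k\le -t\ \text{and}\ W_k\le\sigma^2\}$, and Theorem~\ref{thm:freedman} bounds this by $\exp\!\bigl(-\tfrac{t^2/2}{\sigma^2+Rt/3}\bigr)$; since $t=\Theta(\mu_{\min})$, $\sigma^2=\Theta(\alpha\mu_{\min})$, $R=O(\alpha)$, and $\alpha=\tfrac15$ is a constant, the exponent is $\Theta(\mu_{\min})=\Theta\!\bigl(C\log\tfrac{|H'|}{\delta}\bigr)\ge\log\tfrac1\delta$ once $C$ is large enough, so $\Pr[T>k_0]\le\delta$. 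On the complement, $\phi_T=0$ with $T\le k_0=O(m^*\log\tfrac{|H|}{\delta})$ (using $|H'|\le N(H,\D_X,\eta)\le|H|$), which is the claim.

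The step I expect to be the real obstacle is making the drift argument robust to the fact that $\mu_k$ is itself random: the per-round gain $\E[\Delta_i\mid\mathcal{F}_i]$ is only guaranteed to be $\gtrsim\alpha/m^*$, not a fixed quantity, so $\psi_k$ cannot simply be compared to a deterministic line. Evaluating $Y$ at the stopping time $k^*$ at which the accumulated drift first reaches the threshold $\mu_{\min}$, combined with the ``$\exists k$'' form of Freedman that only demands $W_k\le\sigma^2$ on the event of interest, is what resolves this; crucially it also prevents the tail bound from degrading by a factor of $m^*$, because trapping $\psi_{k^*}\le 2\log|H'|$ forces $\mu_{k^*}$—and therefore $W_{k^*}\lesssim\alpha\mu_{k^*}$—to stay proportional to $\mu_{\min}$ rather than to $k_0$, so that $t^2/\sigma^2=\Theta(\mu_{\min}/\alpha)$ is of the right order. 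Everything else is routine bookkeeping of absolute constants and of the relation between $|H'|$, $N(H,\D_X,\eta)$, and $|H|$.
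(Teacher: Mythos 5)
Your proposal is correct and follows essentially the same route as the paper's proof: the same telescoping identity $\phi_k\ge\phi_0+\psi_k$ using monotonicity of the $S_i$, the same drift and variance bounds from Lemma~\ref{lem:Deltai}, and Freedman's inequality applied to $\psi_k-\mu_k$ to show the accumulated drift exceeds $2\log|H'|$ within $O(m^*\log\frac{|H'|}{\delta})$ rounds. Your stopping-time evaluation at $k^*$ is a slightly more careful execution of the step where the paper plugs the random quantity $\sum_{i<k}\var[\Delta_i\mid\mathcal{F}_{i-1}]$ into the Freedman bound, but the underlying argument is identical.
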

\begin{proof}
 Remember we have that
  \[
    \phi_k = \phi_0 + \psi_k + \sum_{i < k} \log \frac{\lambda_{i, H\setminus S_{i+1}}(h^*)}{\lambda_{i, H \setminus S_i}(h^*)}.
  \]
Since $S_{i+1} \supseteq S_i$ for all $i$, $\lambda_{i, H\setminus S_{i+1}}(h^*) \geq \lambda_{i, H \setminus S_i}(h^*)$ if $h^*\notin S_{i+1}$, we have
  \[
    \phi_k \geq \phi_0 + \psi_k\quad\text{ if $h^*\notin S_k$}.
  \]
  Let $K=O\bra{m^* \log \frac{\abs{H}}{\delta}}$. Let's assume by contradiction that $\phi_K<0$ for for, then $h^*\notin S_i$ for $i\le K$. We know by Lemma~\ref{lem:Deltai} that
  \[
    \mu_k := \sum_{i < k} \E[\Delta_i|\mathcal{F}_{i-1}] \gtrsim \frac{k}{m^*}
  \]
  and that $\sum_{i < k} \var\squr{\Delta_i} \leq \frac{1}{4} \mu_k$ by picking $\alpha$ small enough.
  Moreover, $\abs{\Delta_i} \leq \alpha$ always. To use Freedman's inequality, let's set the RHS
  \[
    \exp\bra{-\frac{t^2/2}{\sigma^2+Rt/3}}\le\delta.
  \]
  Solving the above quadratic equation, one solution is that $t\ge\frac{R}{3}\log\frac{1}{\delta}+\sqrt{\frac{R^2}{9}\log^2\frac{1}{\delta}+2\sigma^2\log\frac{1}{\delta}}$. Let's substitute in $R=\alpha$ and $\sigma^2=\sum_{i < k} \var_{i-1}(\Delta_i)$, with $1-\delta$ probability we have for any
  $k > O(m^* \log \frac{1}{\delta})$ that
  \begin{align*}
    \psi_k &\geq \mu_k - \sqrt{\frac{\alpha^2}{9}\log^2\frac{1}{\delta}+2\sum_{i < k} \var_{i-1}(\Delta_i) \log \frac{1}{\delta}} - \frac{\alpha}{3}\log \frac{1}{\delta}\\
           &\geq \mu_k - \sqrt{\frac{\alpha^2}{9}\log^2\frac{1}{\delta}+\frac{1}{2}\mu_k\log \frac{1}{\delta}} - \frac{\alpha}{3}\log \frac{1}{\delta}\\
           &\ge \mu_k-\max\curly{\frac{\sqrt{2}\alpha}{3}\log\frac{1}{\delta},\sqrt{\mu_k\log\frac{1}{\delta}}}-\frac{\alpha}{3}\log \frac{1}{\delta}\\
           &\geq \frac{1}{2} \mu_k\\
           &\gtrsim \frac{k}{m^*}.
  \end{align*}
  The second last inequality is because the first term outscales all of the rest. Since $K = O\bra{m^* \log \frac{\abs{H}}{\delta}}$, we have
  \[
    \psi_K \geq 2 \log \abs{H}
  \]
  with $1-\delta$ probability. Then $\phi_K\ge\phi_0+\psi_k$ because $\phi_0\ge\log\frac{1}{2|H|}\ge-2\log|H|$ and this contradicts $h^*\notin S_K$. Therefore, with probability at least $1-\delta$, $h^*\in S_K$ and by definition, $\phi_i=0$ for some $i\le K$ as desired.

\end{proof}

\subsection{Bounding the Size of $|C|$}
So far we've shown that after $O\bra{m^*\log\frac{|H|}{\delta}}$ iterations, $h^*$ will be included in the set $S_i$. The last thing we need to prove Theorem \ref{thm:main} is that with high probability, $C$ is small, which is equivalent to show that not many balls will be added to $S_i$ after $O\bra{m^*\log\frac{|H|}{\delta}}$ iterations. To show this, we first need to relate the number of balls added to $S_i$ to $\psi_i$.
Let $\mathcal{E}_i$ denote the number of errors $h^*$ made up to iteration $i$ (and set $\mathcal{E}_i=\mathcal{E}_{i-1}$ if $h^*\in S_i$) and $\mathcal{N}_i$ denote the number of balls added to $S_i$ up to iteration $i$ (again set $\mathcal{N}_i=\mathcal{N}_{i-1}$ if $h^*\in S_i$).
\begin{lemma}\label{Lemma:GoodhExist}
The following inequality holds for every $i$:
\[
\mathcal{N}_i\le5(\psi_i+2\alpha \mathcal{E}_i)+1.
\]
\end{lemma}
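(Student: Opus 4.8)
The plan is to unpack $\psi_i$ into per-round weight drops and then argue that every ball added to $S$ after the first forces that drop-budget to grow by a fixed constant. First I would reduce to the case $h^*\notin S_i$: once $h^*$ enters $S$ the quantities $\mathcal{N}_i,\psi_i,\mathcal{E}_i$ all freeze, so the bound is inherited from the last round with $h^*\notin S$. Writing $W_j:=\sum_{h\in H'}w_j(h)$ and noting that $w_j(h^*)$ is $e^{-\alpha}$ raised to the number of rounds through $j$ in which $h^*$ erred, one expands the definition of $\Delta_j$ as
\[
 \Delta_j \;=\; -2\alpha\,\mathbbm{1}[\text{$h^*$ errs in round $j$}] \;+\; \underbrace{\log\tfrac{W_{j-1}}{W_j}}_{=:a_j}\;+\;\underbrace{\log\tfrac{w_{j-1}(H\setminus S_j)}{w_j(H\setminus S_j)}}_{=:b_j},
\]
where $a_j,b_j\ge 0$ because the multiplicative update only shrinks weights while $S_j$ is fixed inside round $j$ (the coefficient $2$ is because both $\lambda_j(h^*)$ and $\lambda_{j,H\setminus S_j}(h^*)$ carry a copy of $w_j(h^*)/w_{j-1}(h^*)$). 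Summing over $j<i$ gives the clean identity $\psi_i+2\alpha\mathcal{E}_i=\sum_{j<i}(a_j+b_j)\ge 0$, a nonnegative nondecreasing quantity; the target inequality is then equivalent to $\sum_{j<i}(a_j+b_j)\ge \alpha(\mathcal{N}_i-1)=\tfrac15(\mathcal{N}_i-1)$.

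The core step is a monotonicity property of the capped distribution. Over any block of consecutive rounds during which $S$ equals a fixed set $S'$, let $A$ and $B$ be the $a$- and $b$-sums accrued over the block, so $A=\log(W_{\mathrm{start}}/W_{\mathrm{end}})$ and $B$ is the analogous log-ratio for $w(H\setminus S')$. Since weights only decrease and $\overline{\lambda}_{\cdot,S'}=\tfrac12\lambda_{\cdot}+\tfrac12\lambda_{\cdot,H\setminus S'}$, every ball $D$ satisfies $\lambda_{\mathrm{end}}(D)\le e^{A}\lambda_{\mathrm{start}}(D)$ and $\lambda_{\mathrm{end},H\setminus S'}(D)\le e^{B}\lambda_{\mathrm{start},H\setminus S'}(D)$, hence $\overline{\lambda}_{\mathrm{end},S'}(D)\le e^{\max(A,B)}\,\overline{\lambda}_{\mathrm{start},S'}(D)$. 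Applying this between the rounds $j_\ell<j_{\ell+1}$ of two consecutive ball additions with $S'=S_{j_\ell}$: by Claim~\ref{Claim:ballweight} and (the proof of) Lemma~\ref{Lemma:BallCapping}, immediately after round $j_\ell$ every radius-$(c_4\eta+c_5\eps)$ ball has $\overline{\lambda}_{j_\ell,S_{j_\ell}}$-mass bounded by a constant strictly below $0.8$, whereas the ball $D$ triggering round $j_{\ell+1}$ has $\overline{\lambda}_{j_{\ell+1},S_{j_\ell}}(D)>0.8$ and, by the same trigger condition, carries more than $60\%$ of the $\lambda_{j_{\ell+1}}$-mass of $H\setminus S_{j_\ell}$; in particular $D$ is disjoint from the heaviest radius-$(c_4\eta+c_5\eps)$ ball at round $j_\ell$ (otherwise $D$ would lie inside the radius-$3(c_4\eta+c_5\eps)$ ball added to $S_{j_\ell}$ and carry no low-density mass, forcing the $\tfrac12\lambda$ half alone to inflate). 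Either way $e^{\max(A,B)}$ exceeds a fixed constant $>1$, and tracking the constants ($0.8$ trigger versus the sharpened $\le 0.7$ capping bound, plus the $60\%$ fact about $D$) yields $A+B\ge\alpha$: the weight-drop budget grows by at least $\alpha$ between consecutive ball additions.

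Finally, I would sum over all ball additions. The first ball is free: if $\sum_{j<i}(a_j+b_j)=0$ then no weight ever moves, $\overline\lambda$ is static, and Lemma~\ref{Lemma:BallCapping} already forbids a second ball. Each of the remaining $\mathcal{N}_i-1$ balls consumes a disjoint block of rounds contributing at least $\alpha$ to $\sum_{j<i}(a_j+b_j)=\psi_i+2\alpha\mathcal{E}_i$, so $(\mathcal{N}_i-1)\alpha\le\psi_i+2\alpha\mathcal{E}_i$, i.e. $\mathcal{N}_i\le\tfrac1\alpha(\psi_i+2\alpha\mathcal{E}_i)+1=5(\psi_i+2\alpha\mathcal{E}_i)+1$. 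I expect the genuine difficulty to be exactly the constant in the previous paragraph: the crude swing bound only gives $A+B\gtrsim\log\tfrac87$, so obtaining the clean factor $5$ requires the tight version of Lemma~\ref{Lemma:BallCapping} combined with the $\tfrac12\lambda/\tfrac12\lambda_{H\setminus S}$ split and the geometric separation between $D$ and the previously captured ball; everything else (the $\Delta_j$ decomposition, the monotonicity bound, the freezing argument) is routine.
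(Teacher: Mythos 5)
Your overall architecture is the same as the paper's: decompose $\Delta_j$ into $-2\alpha\,\mathbbm{1}[\text{error}]$ plus the two nonnegative total-weight log-ratios $a_j,b_j$, observe that $\psi_i+2\alpha\mathcal{E}_i=\sum_{j<i}(a_j+b_j)$, partition the rounds into phases delimited by ball additions, and charge each of the $\mathcal{N}_i-1$ inter-addition blocks a constant amount of this budget. That identity and the freezing/first-ball-is-free reductions are all fine.

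The genuine gap is exactly the one you flag and then defer: your mechanism for extracting the per-block constant does not reach $1/5$. Tracking the \emph{capped} mass of the triggering ball $D$ from at most $0.7$ (or $0.8$) up to above $0.8$ and absorbing the drift into $e^{\max(A,B)}$ gives only $\max(A,B)\ge\log\frac{8}{7}\approx 0.134<\frac15$, so as written you would prove $\mathcal{N}_i\le\frac{1}{\log(8/7)}(\psi_i+2\alpha\mathcal{E}_i)+1$, not the stated constant $5$. The paper closes this differently, and more simply, than the ``tight capping lemma'' route you gesture at: it tracks the \emph{uncapped} mass $\lambda(B'_2)$ of the ball added at the start of the \emph{next} phase. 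By Claim~\ref{Claim:ballweight} the previously added ball $B'_1$ has $\lambda_{i_j}(B'_1)\ge 0.6$ at the start of the phase, and $B'_1,B'_2$ are disjoint (else $B'_2\subseteq B_1\subseteq S$), so $\lambda_{i_j}(B'_2)\le 0.4$; at the end of the phase $\lambda(B'_2)\ge 0.6$, hence $\ge 0.5$ one iteration earlier. Since the numerator $w(B'_2)$ only decreases, the ratio $\lambda_{i_{j+1}-1}(B'_2)/\lambda_{i_j}(B'_2)$ is bounded above by the normalizer ratio alone, giving $a$-budget $A\ge\log\frac{0.5}{0.4}=\log 1.25>\frac15$ outright (with $B\ge 0$ for free), no $\max(A,B)$ needed. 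If you substitute this step your proof matches the paper's.
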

\begin{proof}
We divide the $i$ iterations into phases. A new phase begins and an old phase ends if at this iteration a new ball is added to the set $S_i$. We use $p_1,\dots,p_k$ for $k\le i$ to denote phases and $i_1,\dots,i_k$ to denote the starting iteration of the phases. We analyse how the potential changes from the phase $p_j$ to the phase $p_{j+1}$. Let's say the ball $B_2=(\mu_2,3c_4\eta+3c_5\eps)$ is added at the beginning of $p_{j+1}$ and $B_1=(\mu_1,3c_4\eta+3c_5\eps)$ is the ball added at the beginning of $p_j$. Then the ball $B'_2=(\mu_2,c_4\eta+c_5\eps)$ and the ball $B'_1=(\mu_1,c_4\eta+c_5\eps)$ are disjoint. Otherwise, $B'_2\subseteq B_1$ so $B_2$ would not have been added by the algorithm. At the beginning of $p_j$, $B'_1$ has probability no less than $0.6$ by Claim~\ref{Claim:ballweight}. Therefore, $B'_2$ has probability no more than $0.4$. Similarly, at the beginning of $p_{j+1}$, $B'_2$ has probability at least $0.6$ by Claim~\ref{Claim:ballweight}. Since during one iteration the weight of a hypothesis cannot change too much, at iteration $i_{j+1}-1$, $B'_2$ has weight at least $0.5$ by picking $\alpha$ small enough. Therefore, we have $\log\lambda_{i_{j+1}-1}(B'_2)-\log\lambda_{i_j}(B'_2)\ge\log\frac{0.5}{0.4}\ge\frac{1}{5}$. Moreover, note that $S_i$ does not change from iteration $i_j$ to iteration $i_{j+1}-1$ by the definition of phases. Now we compute
\begin{align*}
\sum_{l=i_j}^{i_{j+1}-1}\Delta_l&=\log \frac{\lambda_{i_{j+1}-1}(h^*)}{\lambda_{i_j}(h^*)} + \log \frac{\lambda_{i_{j+1}-1,H\setminus S_{i_j}}(h^*)}{\lambda_{i_j,H \setminus S_{i_j}}(h^*)},\\
&=\log\frac{w_{i_{j+1}-1}(h^*)}{w_{i_1}(h^*)}\frac{\sum_{h\in H}w_{i_1}(h)}{\sum_{h\in H}w_{i_{j+1}-1}(h)}+\log\frac{w_{i_{j+1}-1}(h^*)}{w_{i_j}(h^*)}\frac{w_{i_j}(H \setminus S_{i_j})}{w_{i_{j+1}-1}(H \setminus S_{i_j})}.
\end{align*}
The change of the weight of $h^*$ is
\[
\frac{w_{i_{j+1}}(h^*)}{w_{i_j}(h^*)}=e^{-\alpha \mathcal{E}_{p_j}},
\]
where $\mathcal{E}_{p_j}$ is the number of errors $h^*$ made in $p_j$. Consequently,
\begin{align*}
\sum_{l=i_j}^{i_{j+1}-1}\Delta_l&=-2\alpha \mathcal{E}_{p_j}+\log\frac{\sum_{h\in H}w_{i_j}(h)}{\sum_{h\in H}w_{i_{j+1}-1}(h)}+\log\frac{w_{i_j}(H \setminus S_{i_j})}{w_{i_{j+1}-1}(H \setminus S_{i_j})}\\
&\ge-2\alpha \mathcal{E}_{p_j}+\frac{1}{5}.
\end{align*}
The last step above comes from
\[
\log\frac{\sum_{h\in H}w_{i_j}(h)}{\sum_{h\in H}w_{i_{j+1}-1}(h)}\ge\log\frac{\sum_{h\in B'_2}w_{i_{j+1}-1}(h)}{\sum_{h\in B'_2}w_{i_j}(h)}\frac{\sum_{h\in H}w_{i_j}(h)}{\sum_{h\in H}w_{i_{j+1}-1}(h)}=\log\frac{\lambda_{i_{j+1}-1}\bra{B'_2}}{\lambda_{i_j}\bra{B'_2}}\ge\frac{1}{5},
\]
and
\[
\log\frac{w_{i_j}(H \setminus S_{i_j})}{w_{i_{j+1}-1}(H \setminus S_{i_j})}\ge0
\]
because the weight $w(h)$ only decreases.
Summing over all phases $j$ and we get
\[
\psi_i\ge-2\alpha \mathcal{E}_i+\frac{1}{5}\bra{\mathcal{N}_i-1}.
\]
Since $i$ may not exactly be the end of a phase, the last phase may end early so we have $\mathcal{N}_i-1$ instead of $\mathcal{N}_i$. Rearrange and the proof finishes.
\end{proof}
We have already bounded $\psi_i$, so we just need to bound $\mathcal{E}_i$ in order to bound $\mathcal{N}_i$ by the following lemma.
\begin{lemma}\label{LemmaErrorbound}
For every $k$, with probability at least $1-\delta$,
\[
\mathcal{E}_k\le \frac{1}{\alpha}\bra{\psi_k+\sqrt{2}\log\frac{1}{\delta}}.
\]
\end{lemma}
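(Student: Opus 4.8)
The plan is to reduce the claim to a martingale tail bound. Observe first that $\mathcal{E}_k\le\frac1\alpha(\psi_k+\sqrt2\log\frac1\delta)$ is exactly the statement that $Y_k:=\psi_k-\alpha\mathcal{E}_k$ stays at or above $-\sqrt2\log\frac1\delta$, so it suffices to exhibit $Y_k$ as a submartingale with $Y_0=0$ whose downward deviation is $O(\log\frac1\delta)$, and then invoke Freedman's inequality (Theorem~\ref{thm:freedman}).

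First I would decompose $\Delta_i$. Let $E_i\in\{0,1\}$ be the indicator that $h^*$ errs on the iteration-$i$ query, so $\mathcal{E}_k=\sum_{i<k}E_i$ (and $E_i=0$ once $h^*\in S_i$). Expanding the two log-ratios defining $\Delta_i$ in terms of the weights, each equals $\log\frac{w_{i+1}(h^*)}{w_i(h^*)}=-\alpha E_i$ plus the log of a ratio of the total weight of a \emph{fixed} set of hypotheses between rounds $i$ and $i+1$; since weights only decrease, that extra term is $\ge 0$. Hence $\Delta_i=-2\alpha E_i+\xi_i$ with $\xi_i\ge 0$, and the increment $\Delta_i-\alpha E_i$ of $Y_k$ lies in $[-2\alpha,\alpha]$ (using $|\Delta_i|\le\alpha$ from Lemma~\ref{lem:Deltai}).

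Next I would establish the drift and variance of $Y_k$. When $h^*\notin S_i$, the error probability is $\E[E_i\mid\mathcal{F}_i]=\E_{x\sim q_i}[p(x)]\le\eta\max_x\frac{q_i(x)}{\D_X(x)}$, and the proof of Lemma~\ref{lem:Deltai} already shows the algorithm's query distribution satisfies $2\alpha\,\eta\max_x\frac{q_i(x)}{\D_X(x)}\le\E[\Delta_i\mid\mathcal{F}_i]$; hence $\alpha\,\E[E_i\mid\mathcal{F}_i]\le\frac12\E[\Delta_i\mid\mathcal{F}_i]$, so $\E[\Delta_i-\alpha E_i\mid\mathcal{F}_i]\ge\frac12\E[\Delta_i\mid\mathcal{F}_i]\ge 0$, while the increment is identically $0$ when $h^*\in S_i$. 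Thus $Y_k$ is a submartingale. Moreover, since the increments are $O(\alpha)$ in magnitude, $\E[(\Delta_i-\alpha E_i)^2\mid\mathcal{F}_i]\lesssim\alpha\,\E[|\Delta_i-\alpha E_i|\mid\mathcal{F}_i]\lesssim\alpha\,\E[\Delta_i\mid\mathcal{F}_i]\lesssim\alpha\,\E[\Delta_i-\alpha E_i\mid\mathcal{F}_i]$, reusing the variance estimate of Lemma~\ref{lem:Deltai}; so the predictable quadratic variation of $Y_k$ is bounded by a constant times $\alpha$ times its (nonnegative, nondecreasing) compensator.

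Finally I would conclude by concentration. Applying Theorem~\ref{thm:freedman} to $Y_k$ — increments bounded above by $\alpha$, predictable quadratic variation $W_k$ — and solving the resulting quadratic for $t$ exactly as in the proof of Lemma~\ref{Lemma:Concentration} with variance budget $\sigma^2=\log\frac1\delta$ gives $Y_k\ge-\big(\tfrac\alpha3\log\frac1\delta+\sqrt{2}\log\frac1\delta\big)$ on the event $W_k\le\log\frac1\delta$; taking $\alpha$ to be the small constant fixed by the algorithm absorbs the $\tfrac\alpha3\log\frac1\delta$ term and yields $Y_k\ge-\sqrt2\log\frac1\delta$, i.e. the claim. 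The step I expect to be the main obstacle is making this last bound hold for \emph{every} $k$: $W_k$ grows with $k$ and eventually exceeds any fixed budget, so the Freedman estimate must be made uniform in $k$. This is precisely where the self-bounding property (variance $\lesssim\alpha\cdot$(drift), drift nonnegative) is needed — one can either peel over dyadic variance scales, or, more cleanly, replace Theorem~\ref{thm:freedman} by the globally-defined nonnegative supermartingale $e^{-\theta Y_k}$ (a supermartingale for $\theta=\Theta(1/\alpha)$ by the same drift-versus-variance comparison, using the two-sided increment bound) and apply Ville's maximal inequality, which delivers the ``for every $k$'' statement in one shot with $\theta^{-1}\log\frac1\delta=\Theta(\alpha)\log\frac1\delta$ as the deviation.
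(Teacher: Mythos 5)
Your proposal matches the paper's proof essentially step for step: define $Y_k=\psi_k-\alpha\mathcal{E}_k$, use the bound $2\alpha\eta\max_x q(x)/\D_X(x)\le\E[\Delta_i\mid\mathcal{F}_i]$ from Lemma~\ref{lem:Deltai} to get the submartingale drift $\E[\Delta_i-\alpha E_i\mid\mathcal{F}_i]\ge\frac12\E[\Delta_i\mid\mathcal{F}_i]$, bound the conditional variance by $O(\alpha)\,\E[\Delta_i\mid\mathcal{F}_i]$, and apply Freedman. The only difference is the last step, where the paper handles the growth of $W_k$ not by peeling or a Ville-type argument but by the self-bounding absorption you already identified: since $W_k\lesssim\alpha\mu_k$, the deviation $\sqrt{\mu_k\log\frac1\delta}$ is dominated by $\max\{\log\frac1\delta,\mu_k\}$ and is absorbed into the nonnegative drift, giving $Y_k\ge(1-\tfrac{\sqrt2}{2})\mu_k-\sqrt2\log\frac1\delta\ge-\sqrt2\log\frac1\delta$.
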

\begin{proof}
Let $q$ be the query distribution at iteration $i-1$ and $p(x)$ be the probability that $x$ is corrupted by the adversary. Then the conditional expectation of $\mathcal{E}_i-\mathcal{E}_{i-1}$ is
\begin{align*}
\E\squr{\mathcal{E}_i-\mathcal{E}_{i-1}|\mathcal{F}_i}=\Pr_{x\sim q}\squr{\text{$h^*(x)$ is wrong}}=\E_{x\sim q}\squr{p(x)}=\E_{x\sim\D}\squr{p(x)\frac{q(x)}{\D(x)}}\le\eta\max_x\frac{q(x)}{\D_X(x)}.
\end{align*}
Then if $h^*\notin S$, from Lemma~\ref{lem:Deltai}
\begin{align*}
\E[\Delta_i-2\alpha(\mathcal{E}_i-\mathcal{E}_{i-1})|\mathcal{F}_i]&\ge\E\squr{\Delta_i|\mathcal{F}_i}-2\alpha \eta\max_x\frac{q(x)}{\D_X(x)}\gtrsim \frac{1}{m^*}.
\end{align*}
Therefore, $\E[\alpha\bra{\mathcal{E}_i-\mathcal{E}_{i-1}}|\mathcal{F}_i]\le\frac{1}{2}\E\squr{\Delta_i|\mathcal{F}_i}$ and $\E\squr{\Delta_i-\alpha(\mathcal{E}_i-\mathcal{E}_{i-1})|\mathcal{F}_i}\ge\frac{1}{2}\E\squr{\Delta_i|\mathcal{F}_i}$. This means that $\psi_k-\alpha \mathcal{E}_k-\frac{1}{2}\mu_k$ is a supermartingale. We then bound $\var\squr{\Delta_i-\alpha(\mathcal{E}_i-\mathcal{E}_{i-1})|\mathcal{F}_i}$. Note that $\abs{\Delta_i-\alpha(\mathcal{E}_i-\mathcal{E}_{i-1})}\le2\alpha$, so
\[
\var[\Delta_i-\alpha(\mathcal{E}_i-\mathcal{E}_{i-1})|\mathcal{F}_i] \leq \E\squr{\bra{\Delta_i-\alpha(\mathcal{E}_i-\mathcal{E}_{i-1})}^2\Big|\mathcal{F}_i} \leq 2\alpha \E\squr{\abs{\Delta_i-\alpha(\mathcal{E}_i-\mathcal{E}_{i-1})}|\mathcal{F}_i}.
\]
Furthermore,
\[
\E\squr{\abs{\Delta_i-\alpha(\mathcal{E}_i-\mathcal{E}_{i-1})}|\mathcal{F}_i}\le\E\squr{|\Delta_i||\mathcal{F}_i}+\alpha\eta\max_x\frac{q(x)}{\D_X(x)}.
\]
As a result,
\begin{align*}
\var[\Delta_i-\alpha(\mathcal{E}_i-\mathcal{E}_{i-1})|\mathcal{F}_i]&\le2\alpha\bra{\E\squr{|\Delta_i||\mathcal{F}_i}+\alpha\eta\max_x\frac{q(x)}{\D_X(x)}}\\
&\le 2\alpha\bra{\E\squr{|\Delta_i||\mathcal{F}_i}+\frac{1}{2}\E\squr{\Delta_i|\mathcal{F}_i}}\\
&\le3\alpha\E\squr{|\Delta_i||\mathcal{F}_i}\\
&\lesssim\alpha\E\squr{\Delta_i|\mathcal{F}_i}.
\end{align*}
By picking $\alpha$ small enough, $\sum_{i < k} \var\squr{\Delta_i-\alpha(\mathcal{E}_i-\mathcal{E}_{i-1})|\mathcal{F}_i} \leq \frac{1}{8} \mu_k$. Moreover, $\abs{\Delta_i-\alpha(\mathcal{E}_i-\mathcal{E}_{i-1})} \leq 2 \alpha$ always. Therefore by
  Freedman's inequality, with $1-\delta$ probability we have for any
  $k$ that
  \begin{align*}
    \psi_k-\alpha\mathcal{E}_k &\geq \mu_k - \sqrt{\frac{4\alpha^2}{9}\log^2\frac{1}{\delta}+2\sum_{i < k} \var_{i-1}\squr{\Delta_i-\alpha(\mathcal{E}_i-\mathcal{E}_{i-1})} \log \frac{1}{\delta}} - \frac{2\alpha}{3} \log \frac{1}{\delta}\\
    &\geq \mu_k - \sqrt{\frac{4\alpha^2}{9}\log^2\frac{1}{\delta}+\frac{1}{4}\mu_k\log \frac{1}{\delta}} - \frac{2\alpha}{3}\log \frac{1}{\delta}\\
    &\ge \mu_k-\max\curly{\frac{2\sqrt{2}\alpha}{3}\log\frac{1}{\delta},\frac{\sqrt{2}}{2}\sqrt{\mu_k\log\frac{1}{\delta}}}-\frac{2\alpha}{3}\log \frac{1}{\delta}\\
    &\ge \mu_k-\max\curly{\frac{\sqrt{2}}{2}\log\frac{1}{\delta},\frac{\sqrt{2}}{2}\mu_k}-\frac{2\alpha}{3}\log \frac{1}{\delta}\\
    &\ge \bra{1-\frac{\sqrt{2}}{2}}\mu_k-\sqrt{2}\log\frac{1}{\delta}\\
    &\ge -\sqrt{2}\log\frac{1}{\delta}
  \end{align*}
Rearrange and we proved the lemma.
\end{proof}
Combining Lemma \ref{Lemma:GoodhExist} and Lemma \ref{LemmaErrorbound}, we can show $C$ is small with high probability as the lemma follows.
\begin{lemma}\label{Lemma:LogBall}
For $k=O\bra{m^*\log\frac{|H|}{\delta}}$, with probability at least $1-2\delta$, $h^*\in S_k$ and $|C|\le O\bra{\log\frac{|H|}{\delta}}$ at iteration $k$.
\end{lemma}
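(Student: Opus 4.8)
The plan is to assemble pieces already in place: Lemma~\ref{Lemma:Concentration} gives the conclusion $h^*\in S_k$, while Lemmas~\ref{Lemma:GoodhExist} and~\ref{LemmaErrorbound} give the bound on $|C|$, and a union bound over the two low-probability failure events finishes things off. The claim $h^*\in S_k$ is immediate: Lemma~\ref{Lemma:Concentration} says that with probability $1-\delta$ there is some $i = O(m^*\log\frac{|H|}{\delta})$ with $\phi_i = 0$, hence $h^*\in S_i$; since the $S_i$ are nested this persists for all larger indices, in particular for $k = O(m^*\log\frac{|H|}{\delta})$.

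For $|C|$, note that the number of centers in $C$ at iteration $k$ is exactly the number of balls ever added to $S$, i.e.\ $\mathcal{N}_k$, so it suffices to bound $\mathcal{N}_k$. By Lemma~\ref{Lemma:GoodhExist}, $\mathcal{N}_k \le 5(\psi_k + 2\alpha\,\mathcal{E}_k) + 1$, so I need upper bounds on $\psi_k$ and $\mathcal{E}_k$. For $\psi_k$ I would use a deterministic bound: as in the proof of Lemma~\ref{Lemma:Concentration}, whenever $h^*\notin S_k$ we have $\phi_k \ge \phi_0 + \psi_k$, because the telescoped terms comparing $H\setminus S_{i+1}$ to $H\setminus S_i$ are nonnegative by nestedness; since $\phi_k \le 0$ always and $\phi_0 \ge -2\log|H'| \ge -2\log N$, this gives $\psi_k \le 2\log|H'|$ in that case. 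Once $h^*$ first enters $S$, at some iteration $\tau$, the increments $\Delta_i$ (and likewise those of $\mathcal{E}_i$, $\mathcal{N}_i$) are all defined to vanish, so $\psi_k = \psi_\tau = \psi_{\tau-1} + \Delta_{\tau-1} \le 2\log|H'| + \alpha$; hence $\psi_k \le 2\log|H'| + \alpha = O(\log|H|)$ for every $k$. Feeding this into Lemma~\ref{LemmaErrorbound} gives, with probability $1-\delta$, $\mathcal{E}_k \le \frac{1}{\alpha}\bra{\psi_k + \sqrt{2}\log\frac{1}{\delta}} = O\bra{\frac{1}{\alpha}\log\frac{|H|}{\delta}}$, and then Lemma~\ref{Lemma:GoodhExist} yields $\mathcal{N}_k = O(\psi_k + \alpha\,\mathcal{E}_k) = O\bra{\log\frac{|H|}{\delta}}$, since $\alpha$ is an absolute constant.

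Finally I would union-bound the two failure events — the $\delta$-probability event of Lemma~\ref{Lemma:Concentration} and that of Lemma~\ref{LemmaErrorbound} — to conclude that with probability at least $1-2\delta$ both $h^*\in S_k$ and $|C| = \mathcal{N}_k = O\bra{\log\frac{|H|}{\delta}}$ hold, using also $|H'|\le N\le|H|$. The one point needing care, and the step I would be most deliberate about, is the ``freezing'' bookkeeping once $h^*$ enters $S$: the $\psi_k$ bound must hold for all $k$, not merely while $h^*\notin S_k$, which is why I track the single extra step $|\Delta_{\tau-1}|\le\alpha$ at the entry iteration and use that $\psi$, $\mathcal{E}$, $\mathcal{N}$ stop changing afterward. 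Everything else is direct substitution of the already-proved inequalities.
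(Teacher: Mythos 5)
Your proposal is correct and follows essentially the same route as the paper's proof: a union bound over the failure events of Lemma~\ref{Lemma:Concentration} and Lemma~\ref{LemmaErrorbound}, the deterministic bound $\psi_k \le -\phi_0 \le 2\log|H'|$ (up to the $+\alpha$ slack from the entry step) coming from $\phi_k \le 0$ and $\phi_k \ge \phi_0 + \psi_k$, and then Lemma~\ref{Lemma:GoodhExist} to convert this into $\mathcal{N}_k = O\bra{\log\frac{|H|}{\delta}}$. Your explicit handling of the ``freezing'' of $\psi$, $\mathcal{E}$, $\mathcal{N}$ once $h^*$ enters $S$ is a point the paper glosses over, but it does not change the argument.
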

\begin{proof}
By union bound, with probability at least $1-2\delta$, Lemma~\ref{Lemma:Concentration} and \ref{LemmaErrorbound} will hold at the same time. This means $h^*$ is added to $S_k$. By definition, $0\ge\phi_k\ge\phi_0+\psi_k$, so $\psi_k\le2\log|H|$. Therefore, by Lemma~\ref{Lemma:GoodhExist} and \ref{LemmaErrorbound}, the number of balls added $|C|$ is $O\bra{\log |H|+\log\frac{1}{\delta}}=O\bra{\log\frac{|H|}{\delta}}$.
\end{proof}

\subsection{Putting Everything Together}
We proved that after $O\bra{m^*\log\frac{|H|}{\delta}}$ iterations, $h^*\in S_i$ and $C$ is small with high probability. Hence, running the stage two algorithm to return a desired hypothesis will not take much more queries. We are ready to put everything together and finally prove Theorem \ref{thm:main}.

\mainthm*
\begin{proof}
Let's pick $c_1,c_4,c_5$ as in Theorem \ref{thm:upperbeta} and pick the confidence parameter to be $\frac{\delta}{3}$. Then by Lemma A.9, with probability $1-\frac{2\delta}{3}$, the first $O\bra{\log\frac{|H|}{\delta}}$ ball added to $S_i$ will contain $h^*$. Since each ball added to $C$ has radius $3c_4\eta+3c_5\eps$, the best hypothesis in $C$ has error $(3+3c_4)\eta+3c_5\eps$. By Theorem \ref{thm:slowbound}, with probability $1-\frac{\delta}{3}$, the algorithm will return a hypothesis with error $(9+9c_4)\eta+9c_5\eps\le \eta+\eps$. Therefore, by union bound, the algorithm will return a desired hypothesis with probability $1-\delta$. This proves the correctness of the algorithm.

The stage one algorithm makes
\[O\bra{m^*\bra{H,\D_X,c_4\eta,c_5\eps-2\eta,\frac{99}{100}}\log\frac{|H|}{\delta}}\le O\bra{m^*\bra{H,\D_X,c_4\eta,\frac{c_5}{2}\eps,\frac{99}{100}}\log\frac{|H|}{\delta}}
\]
queries. The stage two algorithm makes $O\bra{|C|\log\frac{|C|}{\delta}}$ queries by Theorem \ref{thm:slowbound}. Note that $C$ is a $c_4\eta+c_5\eps$-packing because the center of added balls are at least $c_4\eta+c_5\eps$ away, so $m^*\bra{H, \D_X, \frac{c_4}{2}\eta, \frac{c5}{2}\eps, \frac{99}{100}}\ge \log |C|$. Since $|C|\le\log\frac{|H|}{\delta}$ by Lemma \ref{Lemma:LogBall}, stage two algorithm takes $O\bra{\bra{m^*\bra{H, \D_X, \frac{c_4}{2}\eta, \frac{c5}{2}\eps, \frac{99}{100}}+\log\frac{1}{\delta}}\log\frac{|H|}{\delta}}$ queries. Picking $c_2=c_4, c_3=\frac{c_5}{2}$, we get the desired sample complexity bound.

To compute the packing at the beginning of the algorithm, we need to compute the distance of every pair of hypotheses, which takes $O(|H|^2|\Xcal|)$ time. Computing $r$ in each round takes $O(|H||\Xcal|)$ time and solving the optimization problem takes $O(|\Xcal|)$ time. Therefore, the remaining steps in stage one takes $O\bra{m^*|H||\Xcal|\log\frac{|H|}{\delta}}$ time. Stage two takes $O\bra{\log \frac{|H|}{\delta}\log\frac{\log \frac{|H|}{\delta}}{\delta}}$ time. Therefore, the overall running time is polynomial of the size of the problem. 
\end{proof}

Similarly, we can prove Theorem \ref{thm:upperbeta}, which is a stronger and more specific version of Theorem \ref{thm:main}.

\thmupperbeta*

\begin{proof}
 By Lemma~\ref{Lemma:LogBall} (with $m^*$ replaced by $\frac{1}{\beta}$ and setting confidence parameter to $\frac{\delta}{3}$) after $O\bra{\frac{1}{\beta}\log\frac{N}{\delta}}$ queries, with probability at least $1-\frac{2\delta}{3}$, a hypothesis in $C$ will be within $c_4\eta+c_5\eps$ to $h^*$ and $|C|=O\bra{\log\frac{N}{\delta}}$. From Theorem~\ref{thm:slowbound}, with probability at least $1-\frac{\delta}{3}$, stage two algorithm then outputs a hypothesis $\hat{h}$ that is $9c_4\eta+9c_5\eps$ from $h'$ so $\err\bra{\hat{h}}\le 9c_4\eta+9c_5\eps\le\eta+\eps$ by the choice of the constants. The stage two algorithm makes $O\bra{\log \frac{N}{\delta}\log\frac{\log \frac{N}{\delta}}{\delta}}$ queries. Overall, the algorithm makes $O\bra{\frac{1}{\beta}\log \frac{N}{\delta} +  \log \frac{N}{\delta} \log \frac{\log
    \frac{N}{\delta}}{\delta}}$ queries and succeeds with probability at least $1-\delta$.
\end{proof}

\section{Query Complexity Lower Bound}
In this section we derive a lower bound for the agnostic binary classification problem, which we denote by \textsc{AgnosticLearning}. The lower bound is obtained from a reduction from minimum set cover, which we denote by \textsc{SetCover}. The problem \textsc{SetCover} consists a pair $(U,\mathcal{S})$, where $U$ is a ground set and $\mathcal{S}$ is a collection of subsets of $U$. The goal is to find a set cover $C\subseteq\mathcal{S}$ such that $\bigcup_{S\in C}S=U$ of minimal size $|C|$.  We use $K$ to denote the cardinality of the minimum set cover.
\begin{lemma}[\cite{dinur2014analytical}, Corollary 1.5]\label{Lemma:LowerBoundExistence}
There exists hard instances \textsc{SetCoverHard} with the property $K\ge\log|U|$ such that for every $\gamma>0$, it is
NP-hard to approximate \textsc{SetCoverHard} to within $(1-\gamma)\ln |U|$.
\end{lemma}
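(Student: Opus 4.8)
The plan is to derive this corollary from the main inapproximability theorem for \textsc{SetCover} --- the statement that for every $\gamma>0$ it is NP-hard to approximate \textsc{SetCover} within $(1-\gamma)\ln|U|$ --- by a simple disjoint-union padding that inflates the optimum cover size without materially changing the logarithm of the universe. That base theorem yields, for each $\gamma>0$, a polynomial-time reduction from SAT producing \textsc{SetCover} instances $(U,\mathcal{S})$ that are either \emph{complete} (minimum cover size $\le K$) or \emph{sound} (minimum cover size $>(1-\gamma)(\ln|U|)\,K$) for some parameter $K$. If $K\ge\log|U|$ already, we are done; otherwise we post-process.

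\noindent The post-processing: given such an instance with parameter $K$, set $N:=\lceil 3\log|U|/K\rceil$ and form $N$ vertex-disjoint copies $(U_1,\mathcal{S}_1),\dots,(U_N,\mathcal{S}_N)$, taking $U':=U_1\sqcup\cdots\sqcup U_N$ and $\mathcal{S}':=\mathcal{S}_1\cup\cdots\cup\mathcal{S}_N$ (each copied set meeting only its own copy). Two bookkeeping facts drive the argument. First, the optimum is additive under disjoint union: a cover of $U'$ is exactly a disjoint union of covers of the $U_j$, so $\mathrm{OPT}(U')=\sum_j\mathrm{OPT}(U_j)$; hence a complete instance maps to one with optimum $\le NK$, a sound instance to one with optimum $>N(1-\gamma)(\ln|U|)K$, and the \emph{multiplicative} gap stays exactly $(1-\gamma)\ln|U|$. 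Second, $|U'|=N|U|$ with $N=O(\log|U|)$, so $\ln|U'|=\ln|U|+\ln N=(1+o(1))\ln|U|$, and $K':=NK\ge 3\log|U|\ge\log|U'|$ for all sufficiently large $|U|$ --- which is precisely the desired property $K'\ge\log|U'|$.

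\noindent Putting the pieces together: fix a target $\gamma'>0$, invoke the base theorem with $\gamma:=\gamma'/2$, compose with the padding (a composition of two polynomial-time reductions, hence polynomial-time), and restrict to input sizes large enough that the $o(1)$ slack above drops below $\gamma'/2$ (the finitely many smaller inputs are absorbed in the usual way). Then the padded family has $K'\ge\log|U'|$ and, since its gap is $(1-\gamma)\ln|U|\ge(1-\gamma')\ln|U'|$ on large instances, a $(1-\gamma')\ln|U'|$-approximation for it would distinguish complete from sound in the original family and thus solve SAT; calling the padded family \textsc{SetCoverHard} gives the statement. The one point needing care is the quantitative trade-off in choosing $N$: we need $N$ large enough to force $NK\ge\log|U'|$, yet small enough ($N=|U|^{o(1)}$) that $\ln|U'|$ stays within a $(1+o(1))$ factor of $\ln|U|$ so that no constant factor of the hardness ratio is lost. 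Because $K\ge 1$ always, $N=O(\log|U|)=|U|^{o(1)}$, so both requirements hold at once --- this is the only place where the argument could fail, and it does not.
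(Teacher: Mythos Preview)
Your argument is correct, but it takes a genuinely different route from the paper. The paper does not construct a new family by padding; instead it simply opens up the Dinur--Steurer reduction and reads off the parameters. In their Label Cover--to--Set Cover construction, the minimum cover size already satisfies $K\ge |V_1|=Dn_1$ while $\log|U|=(D+1)\ln n_1\le K$, so the property $K\ge\log|U|$ holds for the very instances Dinur--Steurer produce, with no post-processing needed.

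Your disjoint-union padding is a clean black-box alternative: it uses only the \emph{statement} of the $(1-\gamma)\ln|U|$ hardness (plus the explicit gap parameter $K$ from the reduction), and the two facts you isolate---additivity of $\mathrm{OPT}$ under disjoint union and $\ln(N|U|)=(1+o(1))\ln|U|$ when $N=O(\log|U|)$---do all the work. This buys robustness (the argument would survive replacing Dinur--Steurer by any future hardness result with the same gap form) and self-containment (no need to know the Label Cover parameters). The paper's approach buys brevity: one sentence, no $\gamma'\to\gamma$ slack management, no ``absorb finitely many small inputs'' caveat. Both are valid proofs of the lemma.
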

\begin{proof}
This lemma directly follows from \citet[Corollary 1.5]{dinur2014analytical}. In their proof, they constructed a hard instance of \textsc{SetCover} from \textsc{LabelCover}. The size of the minimum cover $K\ge|V_1|=Dn_1$ and $\log|U|=(D+1)\ln n_1\le K$. So the instance in their proof satisfies the desired property. 
\end{proof}
Then we prove the following lemma by giving a ratio-preserving reduction from \textsc{SetCover} to \textsc{AgnosticLearning}.
\begin{lemma}\label{Lemma:LowerBoundReduction}
If there exists a deterministic $\alpha$-approximation algorithm for $\textsc{AgnosticLearning}\bra{H, \D_x, \frac{1}{3|\Xcal|}, \frac{1}{3|\Xcal|}, \frac{1}{4|H|}}$, there exists a deterministic $2\alpha$-approximation algorithm for \textsc{SetCoverHard}.
\end{lemma}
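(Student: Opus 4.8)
The plan is to encode a \textsc{SetCoverHard} instance $(U,\mathcal{S})$ as an agnostic learning instance in which the queryable points correspond to sets in $\mathcal{S}$ and the hypotheses correspond (roughly) to ground elements of $U$, so that a query strategy of small depth is exactly a small set cover. Concretely, I would let the domain $\mathcal{X}$ contain one point $x_S$ for each set $S\in\mathcal{S}$, and let $H$ contain one ``element hypothesis'' $h_u$ for each $u\in U$, plus a single distinguished ``all-zero'' hypothesis $h_0$ (the one the learner must ultimately certify as $h^*$). Define $h_u(x_S)=1$ iff $u\in S$, and $h_0(x_S)=0$ for all $S$. Take $\D_x$ uniform over the $x_S$, and set $\eta=\eps=\frac{1}{3|\mathcal{X}|}$, $\delta=\frac{1}{4|H|}$ so small that the instance is effectively realizable and identifiable: since any two distinct hypotheses differ on at least one of the $|\mathcal{X}|$ equally-likely points, they differ by at least $\frac{1}{|\mathcal{X}|}>2\eta+\eps$, so an $(\eta,\eps)$-correct output must be the true hypothesis exactly, and $\delta<\frac{1}{|H|}$ forces worst-case (deterministic, always-correct) behavior.

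Next I would argue the two directions of the reduction. For the forward direction: given a set cover $C\subseteq\mathcal{S}$, the learner queries $\{x_S: S\in C\}$ in any order; if all labels come back $0$ then every $h_u$ with $u\in\bigcup_{S\in C}S=U$ has been ruled out, leaving only $h_0$, so the true hypothesis is identified in $|C|$ queries; if some label is $1$, the true hypothesis is some $h_u$ and after at most $|\mathcal{S}|$ further queries it is pinned down — but the relevant quantity is the depth for the hard branch, which is the all-$0$ branch where $h^*=h_0$, giving depth exactly $K$ (the minimum cover size) for the optimal strategy, up to the lower-order cost of the $h_u$ branches. For the backward direction: any query strategy that identifies $h^*$ must, on the branch where every answer is $0$, rule out all of $h_1,\dots$ (all $h_u$), which means the queried points $\{x_S\}$ on that branch must satisfy $\bigcup S=U$ — i.e. they form a set cover — so the depth of that branch is at least $K$. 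Hence $\OPT$ for the learning instance and $K$ for the set cover instance agree up to a factor governed by how the $h_u$-branches inflate things; this is where the factor-$2$ slack in ``$2\alpha$-approximation'' is spent, and by Lemma~\ref{Lemma:LowerBoundExistence} we may use $K\ge\log|U|$ to absorb the additive/branch overhead into a constant-factor (indeed $2\times$) loss in the approximation ratio.

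The main obstacle I anticipate is controlling the ``wrong-branch'' behavior so that the reduction is genuinely ratio-preserving rather than merely gap-preserving: when some queried $x_S$ returns label $1$, the learner has learned it is in an $h_u$ world, and completing identification there could in principle cost more queries than $K$, polluting the relationship $\OPT \approx K$. I would handle this by padding — e.g. adding a small gadget of $O(\log|H|)$ extra points whose labels spell out the identity of $h_u$ in binary, so that every non-all-zero branch closes in $O(\log|H|)=O(\log|U|)\le O(K)$ additional queries (using $K\ge\log|U|$). Then $\OPT = K + O(K) = \Theta(K)$ with explicit constants, and an $\alpha$-approximate learning strategy yields a cover of size $\le \alpha\cdot\OPT \le 2\alpha K$ by reading off the all-zero branch, giving the claimed deterministic $2\alpha$-approximation for \textsc{SetCoverHard}. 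The remaining steps — checking $\eta,\eps,\delta$ are polynomially bounded, checking the reduction runs in polynomial time, and checking that determinism/zero-error of the learning algorithm transfers — are routine.
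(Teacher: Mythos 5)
Your high-level reduction is the same as the paper's: sets become queryable points, elements become hypotheses, the all-zero branch of any correct strategy must query a set cover, and $K\ge\log|U|$ is used to absorb the identification overhead into the factor of $2$. You also correctly flag the key difficulty (closing the non-all-zero branches cheaply). However, your proposed fix---a \emph{global} gadget of $O(\log|H|)$ extra points whose labels spell out the identity of $h_u$ in binary---breaks the backward direction of the reduction. If the gadget labels distinguish all hypotheses (every $h_u$ has a nonzero encoding there, while $h_0$ is all-zero), then a learner can identify the true hypothesis, including certifying $h^*=h_0$, by querying only the $O(\log|U|)$ gadget points and never touching any $x_S$. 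The all-zero branch of an $\alpha$-approximate algorithm then need not contain a set cover, and nothing useful can be read off from its queries; if instead some $h_u$ has the all-zero encoding, the gadget fails to identify it against $h_0$. Either way the argument collapses.

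The paper avoids this by making the identification bits \emph{local to each set}: the domain contains points $(s,j)$, and $h_u$ is nonzero on $(s,j)$ only when $u\in s$, with the bits of $2f(s,u)+1$ encoding $u$'s index within $s$ (the $+1$ makes the $j=0$ bit a membership indicator). Distinguishing $h_u$ from $h_0$ therefore still forces the algorithm to query a point attached to some set containing $u$ (or $u\in U$ itself, which the paper charges to an arbitrary covering set), so the all-zero branch still yields a cover of size at most the number of queries, while each non-zero branch closes in $\log|s|\le\log|U|\le K$ extra queries. If you replace your global gadget with this per-set encoding and keep your charging scheme, your argument matches the paper's and goes through.
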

\begin{proof}
Given an instance of \textsc{SetCoverHard}, for each $s\in\mathcal{S}$, number the elements $u\in s$ in an arbitrary order; let $f(s, u)$ denote the index of $u$ in $s$'s list (and padding 0 to the left with the extra bit). We construct an instance of \textsc{AgnosticLearning} as the following:
\begin{enumerate}
    \item Let the domain $\mathcal{X}$ have three pieces: $U$, $V := \{(s, j) \mid s \in \mathcal{S}, j \in [1 + \log |s|]\}$, and $D=\curly{1,\dots,\log|U|}$, an extra set of $\log |U|$ more coordinates.
    \item On this domain, we define the following set of hypotheses:
    \begin{enumerate}
        \item For $u\in U$, define $h_u$ which only evaluates $1$ on $u \in U$ and on $(s,j) \in V$ if $u\in s$ and the $j$'th bit of $\bra{2f(s,u)+1}$ is 1.
        \item For $d\in D$, define $h_d$ which only evaluates $1$ on $d$.
        \item Define $h_0$ which evaluates everything to $0$.
    \end{enumerate}
    \item Let $\D_X$ be uniform distribution over $\mathcal{X}$ and set $\eta=\frac{1}{3\abs{\Xcal}}$ and $\eps=\frac{1}{3\abs{\Xcal}}$. Set $\delta=\frac{1}{4|H|}$.
\end{enumerate}
Any two hypotheses satisfy $\norm{h_1-h_2}\ge\frac{1}{\abs{\Xcal}}>\eps=\eta$, so $\err(h^*)=0$. First we show that $m^*\bra{H, \D_x, \frac{1}{3|\Xcal|}, \frac{1}{3|\Xcal|}, \frac{1}{4|H|}}\le K+\log|U|$. Indeed there exists a deterministic algorithm using $K+\log|U|$ queries to identify any hypothesis with probability 1. Given a smallest set cover $C$, the algorithm first queries all $(s,0) \in V$ for $s\in C$. If $h^*=h_u$ for some $u$, then for the $s \in \mathcal{S}$ that covers $u$, $(s, 0)$ will evaluate to true.  The identity of $u$ can then be read out by querying $(s, j)$ for all $j$.  The other possibilities--$h_d$ for some $d$ or $0$---can be identified by evaluating on all of $D$ with $\log U$ queries.  The total number of queries is then at most $K+\log|U|$ in all cases, so $m^* \leq K + \log |U| \leq 2 K$.

We now show how to reconstruct a good approximation to set cover from a good approximate query algorithm.  We feed the query algorithm $y = 0$ on every query it makes, and let $C$ be the set of all $s$ for which it queries $(s, j)$ for some $j$.  Also, every time the algorithm queries some $u \in U$, we add an arbitrary set containing $u$ to $C$.  Then the size of $C$ is at most the number of queries.  We claim that $C$ is a set cover: if $C$ does not cover some element $u$, then $h_u$ is zero on all queries made by the algorithm, so $h_u$ is indistinguishable from $h_0$ and the algortihm would fail on either input $h_0$ or $h_u$.
Thus if $A$ is a deterministic $\alpha$-approximation algorithm for \textsc{AgnosticLearning}, we will recover a set cover of size at most $\alpha m^*\le\alpha\bra{K+\log|U|}\le2\alpha K$, so this gives a deterministic $2\alpha$-approximation algorithm for \textsc{SetCoverHard}.
\end{proof}

Similar results also holds for randomized algorithms, we just need to be slightly careful about probabilities.
\begin{lemma}
If there exists a randomized algorithm for $\textsc{AgnosticLearning}\bra{H, \D_x, \frac{1}{3|\Xcal|}, \frac{1}{3|\Xcal|}, \frac{1}{4|H|}}$, there exists a randomized $2\alpha$-approximation algorithm for \textsc{SetCoverHard} with success probability at least $\frac{2}{3}$.
\end{lemma}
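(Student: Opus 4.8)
The plan is to re-run the reduction from the proof of Lemma~\ref{Lemma:LowerBoundReduction} essentially verbatim, using the same \textsc{AgnosticLearning} instance built from the given \textsc{SetCoverHard} instance and the same bound $m^* \le K + \log\abs{U} \le 2K$ on its optimal query complexity. The only genuinely new issue is that a randomized $\alpha$-approximation algorithm $A$ has internal randomness and is only guaranteed correct with probability $1-\delta$, $\delta = \frac{1}{4\abs{H}}$; so I need to argue about a single random seed. Fix the seed $\rho$ of $A$. By the definition of ``$A$ solves the instance with $\alpha m^*$ measurements'' (and since the labels in our constructed instance are deterministic, so all randomness is $A$'s), for each \emph{fixed} true hypothesis $h \in H$ we have, over the draw of $\rho$, that with probability at least $1-\delta$ the run of $A$ on that instance uses at most $\alpha m^*$ queries and outputs exactly $h$ — here using that any two hypotheses are at distance more than $\eps + \eta$, so the only hypothesis with error $\le \eta + \eps$ is the true one.

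Next I would take a union bound over the $\abs{H}$ possible true hypotheses: with probability at least $1 - \abs{H}\delta = \frac34$ over $\rho$, the seed is \emph{universally good}, i.e.\ with this one seed $A$ simultaneously (uses $\le \alpha m^*$ queries and returns the true hypothesis) on \emph{every} instance $h \in H$. This is precisely the step for which the confidence parameter was set to $\delta = \frac{1}{4\abs{H}}$.

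Conditioned on a universally good $\rho$, I reconstruct the cover exactly as in the deterministic case: feed $A$ the label $0$ on every query, let $C$ collect each $s$ for which some $(s,j)$ is queried together with an arbitrary covering set for each queried $u \in U$. Since the all-zero feedback is the true feedback for the instance $h_0$, the run uses at most $\alpha m^* \le 2\alpha K$ queries, hence $\abs{C} \le 2\alpha K$. To see $C$ is a cover: if some $u$ were uncovered, then $h_u$ vanishes on every queried point, so the run of $A$ on the instance $h_u$ with seed $\rho$ has the identical transcript and hence identical output as the run on $h_0$; but universal goodness forces that output to equal $h_0$ and to equal $h_u$, a contradiction. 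Therefore, with probability at least $\frac34 \ge \frac23$, the procedure outputs a set cover of size at most $2\alpha K$, giving a randomized $2\alpha$-approximation algorithm for \textsc{SetCoverHard}.

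The main obstacle is purely one of getting the quantifiers in the right order rather than any real difficulty: the coupling argument needs a single seed valid for all $\abs{H}$ instances at once (forcing the union bound to be over hypotheses and the per-instance failure probability to sit below $\frac{1}{\abs{H}}$), and one must keep ``uses at most $\alpha m^*$ queries'' inside the high-probability event so that the cardinality bound on $C$ survives on the good seed.
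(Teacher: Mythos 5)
Your proof is correct, but it takes a genuinely different route from the paper's. You derandomize: fixing the seed $\rho$, you observe that for each fixed target hypothesis $h$ the run is correct and query-efficient with probability $1-\delta$ over $\rho$ alone (the constructed instance has deterministic labels and all hypotheses are pairwise farther than $\eta+\eps$ apart, so ``correct'' means ``outputs exactly $h$''), then union bound over all $|H|$ targets to get a single \emph{universally good} seed with probability $1-|H|\delta = 3/4$, after which the deterministic argument of Lemma~\ref{Lemma:LowerBoundReduction} applies verbatim. The paper instead argues by contradiction via an averaging/symmetry argument: if the reconstructed $C$ fails to be a cover with probability at least $1/3$, then under a uniform prior on $h^*$ the transcript is ``ambiguous'' (consistent with both $h_0$ and some $h_u$) with probability at least $\frac{2}{3|H|}$, and on ambiguous transcripts the output is wrong with probability at least $1/2$, forcing a failure probability of at least $\frac{1}{3|H|} > \frac{1}{4|H|}$. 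Both arguments use the setting $\delta = \frac{1}{4|H|}$ in an essential way; yours is the more elementary (and cleanly explains why the per-instance failure probability must sit below $1/|H|$, and why the query bound must be kept inside the high-probability event so the cardinality bound on $C$ survives), while the paper's averaging argument is slightly tighter in that it only needs the failure probability on ambiguous transcripts rather than simultaneous correctness on all $|H|$ instances. Either is a valid proof of the lemma.
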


\begin{proof}
We use the same reduction as in Lemma~\ref{Lemma:LowerBoundReduction}. Let $A$ be an algorithm solves $\textsc{AgnosticLearning}\bra{H, \D_x, \frac{1}{3|\Xcal|}, \frac{1}{3|\Xcal|}, \frac{1}{4|H|}}$. To obtain a set cover using $A$, we keeping giving $A$ label $0$ and construct the set $C$ as before.  Let $q_C$ be a distribution over the reconstructed set $C$. Assume that by contradiction with probability at least $\frac{1}{3}$, $C$ is not a set cover.  Then, with probability at least $1/3$, there is some element $v$ such that both $h_v$ and $h_0$ are consistent on all queries the algorithm made; call such a query set ``ambiguous''.

Then what is the probability that the agnostic learning algorithm fails on the input distribution that chooses $h^*$ uniformly from $H$?  Any given ambiguous query set is equally likely to come from any of the consistent hypotheses, so the algorithm's success probability on ambiguous query sets is at most $1/2$.  The chance the query set is ambiguous is at least $\frac{2}{3 |H|}$: a $\frac{1}{3H}$ chance that the true $h^*$ is $h_0$ and the query set is ambiguous, and at least as much from the other hypotheses making it ambiguous.  Thus the algorithm's fails to learn the true hypothesis with at least $\frac{1}{3 |H|}$ probability, contradicting the assumed $\frac{1}{4 |H|}$ failure probability.

Therefore, a set cover of size at most $2\alpha K$ can be recovered with probability at least $\frac{1}{3}$ using the agnostic learning approximation algorithm.
\end{proof}

The following theorem will then follow.
\thmlower*
\begin{proof}
Let's consider the instance of set cover constructed in Lemma~\ref{Lemma:LowerBoundReduction}. Let $c=0.1$ and note that $0.1\log|H|\le0.49\log\frac{|H|}{2}$. If there exists a polynomial time $0.49\log \frac{|H|}{2}$ approximation algorithm for the instance, then there exists a polynomial time $0.98\log \frac{|H|}{2}\le0.98\log|U|$ approximation algorithm for \textsc{SetCoverHard}, which is a contradiction to Lemma~\ref{Lemma:LowerBoundExistence}.
\end{proof}

\end{document}